\newcommand{\Appendix}[1]{the full version for}
\newtheorem{theorem}{Theorem}[section]
\newtheorem{remark}{Remark}
\newtheorem{example}{Example}
\newtheorem{definition}{Definition}
\newcommand{\x}{\bm{x}}
\newcommand{\y}{\bm{y}}
\newcommand{\z}{\mathbf{z}}
\newcommand{\I}{\mathbf{I}}
\newcommand{\bk}{\mathbf{k}}
\newcommand{\K}{\mathcal{K}}
\newcommand{\R}{\mathbb{R}}
\newcommand{\bbR}{\mathbb{R}}
\renewcommand{\comment}[1]{}
\newcommand{\red}[1]{}
\newcommand{\cA}{\mathcal{A}}
\newcommand{\cC}{\mathcal{C}}
\newcommand{\cD}{\mathcal{D}}
\newcommand{\cF}{\mathcal{F}}
\newcommand{\cH}{\mathcal{H}}
\newcommand{\cK}{\mathcal{K}}
\newcommand{\cN}{\mathcal{N}}
\newcommand{\cP}{\mathcal{P}}
\newcommand{\cQ}{\mathcal{Q}}
\newcommand{\cX}{\mathcal{X}}
\newcommand{\bbE}{\mathbb{E}}
\definecolor{colorY}{rgb}{0.7 , 0.7 , 0.2}
\DeclareMathOperator*{\argmax}{argmax}
\DeclareMathOperator*{\argmin}{argmin}
\newenvironment{proofoutline}{\noindent{\emph{Proof Sketch. }}}{\hfill$\square$\medskip}
\title{Offline-to-online hyperparameter transfer for stochastic bandits}
\author{{Dravyansh Sharma}$^\dag\!$ \and {Arun Sai Suggala}\footnote{Google LLC.\\\hspace*{4mm}$^\dag\!$ TTIC. Part of the work was done when DS was an intern at Google.}}
\date{}
\begin{document}

\maketitle

\begin{abstract}%
  Classic algorithms for stochastic bandits typically use hyperparameters that govern their critical properties such as the trade-off between exploration and exploitation. Tuning these hyperparameters is a problem of great practical significance. However, this is a challenging problem and in certain cases is information theoretically impossible. To address this challenge, we consider a practically relevant transfer learning setting where one has access to offline data collected from several bandit problems (tasks) coming from an unknown distribution over the tasks. Our aim is to use this offline data to set the hyperparameters for a new task drawn from the unknown distribution. We provide bounds on the inter-task (number of tasks) and intra-task (number of arm pulls for each task) sample complexity for learning near-optimal hyperparameters on unseen tasks drawn from the distribution. Our results apply to several classic algorithms, including tuning the exploration parameters in UCB and LinUCB and the noise parameter in GP-UCB. Our experiments indicate the significance and effectiveness of the transfer of hyperparameters from offline problems in online learning with stochastic bandit feedback.
\end{abstract}

\section{Introduction}
 Bandit optimization is a very important framework for sequential decision making with numerous applications, including recommendation systems~\cite{li2010contextual}, healthcare~\cite{tewari2017ads}, AI for Social Good (AI4SG)~\cite{mate2022field}, hyperparameter tuning in deep learning~\cite{bergstra2011algorithms}. Over the years, numerous works have designed optimal algorithms for bandit optimization in various settings~\cite{garivier2011kl, abbasi2011improved, whitehouse2023improved}. 
 One of the key challenges in deploying these algorithms in practice is setting their hyperparameters appropriately. Some examples of these hyperparameters include the confidence width parameter in UCB~\cite{auer2002finite} and the exploration parameter in $\epsilon$-greedy algorithms. While one could rely on theory-suggested hyperparameters, they often turn out to be too optimistic and lead to suboptimal performance in practice. This is because these hyperparameters are meant to guard the algorithm against worst-case problems and are not necessarily optimal for typical problems arising in a domain. To see this, consider a Multi-Armed Bandit (MAB) instance with two arms, with rewards of each arm drawn from a uniform distribution over an unknown unit width interval. Suppose, the  problems encountered in practice are such that the two distributions are well separated without any overlap. Then, the optimal choice of the UCB confidence width parameter is $0$ (\emph{i.e.,} there is no need for exploration). This is in contrast to the theory suggested choice of $1$, which incurs an exploration penalty. This simplistic example illustrates the importance of choosing hyperparameters appropriately in practice. 


Although hyperparameter selection is well studied in offline learning~\cite{stone1974cross, efron1992bootstrap}, it is still an emerging field in bandit optimization. Several recent studies have attempted to address this problem. One particularly popular approach in this line of works is the design of meta-bandit algorithms that treat each hyperparameter as an expert and adaptively select the best expert by running another bandit algorithm on top (also known as \emph{corralling} algorithms)~\cite{agarwal2017corralling}. However, the current algorithms for corralling and their theoretical guarantees are not satisfactory. For instance, consider the stochastic MAB problem. Let's again consider the problem of picking the UCB confidence width parameter. The corralling algorithm of~\cite{agarwal2017corralling} incurs a regret overhead of $O(\sqrt{MT})$ compared to the regret of the best hyperparameter (where $M$ is the size of the hyperparameter search space). This overhead can be quite large when $M$ is large. Furthermore, their algorithm requires the base algorithms to satisfy certain stability conditions. However, UCB is known to satisfy this condition \emph{only} for certain values of the hyperparameter. Recent works of~\cite{arora2021corralling} improved the regret guarantees of \cite{agarwal2017corralling} by considering stochastic bandits. But their regret bounds, when specialized to UCB confidence width tuning, are worse than the regret bounds obtained using the theory suggested hyperparameter. 

In this work, we first ask the following question: \emph{Is it possible to choose good hyperparameters for a given bandit algorithm that perform nearly as well as the best possible hyperparameters, on any given problem instance?\footnote{A problem instance is defined as a set of arms and their reward distributions.}} Interestingly, we answer this question negatively, showing that even in the simplest problem of stochastic multi-armed bandits (MABs), determining the best hyperparameter for UCB is information-theoretically impossible. 



To address the challenge of hyperparameter selection in bandit algorithms, we propose a data-driven approach. We assume that the learner has access to historical data from \emph{similar} problem instances to the one at hand. This is a reasonable assumption in applications such as hyperparameter tuning in deep learning, where we often have offline data collected from previous hyperparameter tuning runs on similar tasks. Our goal is to leverage this side information to find a good hyperparameter that approximately minimizes the expected reward of the algorithm.

We develop a formal framework for hyperparameter tuning of stochastic bandit algorithms in the presence of side information. In our framework, we assume that problem instances in an application domain are drawn from an unknown distribution, and that the learner has access to historical data collected from problem instances sampled from this distribution. Effective hyperparameter tuning in our framework corresponds to minimizing two quantities of interest while learning near-optimal hyperparameters  --- (a) the inter-task sample complexity, which is the number of tasks or problem instances that the learner needs in the historical data, and (b) the intra-task  sample complexity which corresponds to the number of arm pulls/data points collected from each task. In this work, we derive inter-task and intra-task sample complexity bounds for learning hyperparameters that are provably close in performance to the domain-specific optimal choice, on unseen tasks drawn from the distribution. Our results apply to several classic bandit learning algorithms, including tuning the confidence width or exploration parameters in UCB (multi-armed bandits) and LinUCB~\cite{abbasi2011improved}, and the noise parameter in GP-UCB (GP bandits)~\cite{srinivas2010gaussian}. 

Here is a summary of our contributions:
\begin{itemize}
\itemsep=-2pt
    \item We show that learning near-optimal hyperparameters for every problem instance is impossible without suffering sub-optimal regret, by exhibiting a lower bound in the case of MAB with arm rewards sampled from a Gaussian distribution with unknown variance (Section \ref{sec:lower-bound}).
    \item We present a formal framework for hyperparameter transfer in the bandit setting, and provide general tools for bounding the inter-task and intra-task sample complexity (Sections \ref{sec:framework} and \ref{sec:sample-complexity}).
    \item We instantiate our tools to obtain concrete sample complexity bounds for tuning the exploration parameter in the UCB and LinUCB algorithms (Section \ref{sec:exploration-parameter}). We further obtain sharper sample complexity for the special case of MAB with Bernoulli rewards, and a larger sample complexity for simultaneously learning the prior and the hyperparameter. We also extend our techniques to tune the noise parameter in GP-UCB (Section \ref{sec:gpucb}).
    \item Our experimental results (Section \ref{sec:expts}) demonstrate the effectiveness and significance of transferring hyperparameter knowledge from offline data to online bandit optimization.
\end{itemize}


\section{Related Work}

\paragraph{Transfer Learning.} 
Transfer learning in stochastic bandits, across several related tasks, has received recent attention from the community~\cite{yogatama2014efficient}. But unlike our work, majority of these works have focused on transferring the knowledge of the reward model from one task to another. Our work is complementary to these works as we focus on transferring the knowledge of hyperparameters. \cite{kveton2020meta} considered a similar setting as us, where the bandit instances are sampled from an unknown distribution. The authors studied model learning using gradient descent for simpler policies like explore-then-commit (under nice arm-reward distributions) for which the expected reward as a function of the parameter is concave and easier to optimize. This structure does not hold for several typical reward distributions (e.g.\ Bernoulli). Our results hold for general unknown reward distributions and more powerful UCB-based algorithm families. 
\cite{azar2013sequential} considered a sequential arrival of tasks, but for the much simpler setting of finitely many models or bandit problems (finite $\Pi$ in our notation) which are known to the learner. 
In a similar line of work, \cite{khodak2023meta} designed a meta-algorithm to set the initialization, and other hyperparameters of Online Mirror Descent algorithm, based on past tasks.  \cite{swersky2013multi, wang2022pre} showed that learning priors from historic data helped improve the performance of GP-UCB. We note that these works are mostly empirical in nature and do not provide any sample complexity bounds. Apart from bandit feedback, transfer learning of hyperparameters in similar tasks has also been studied in online learning with {\it full information} feedback \cite{finn2019online,khodak2019adaptive}. 

\paragraph{Corralling.} Corralling bandits \cite{agarwal2017corralling,cutkosky2020upper,arora2021corralling,luo2022corralling} is a recent line of work which involves design of bandit algorithms that compete with the best  algorithm from among a finite collection of bandit learning algorithms. The remarkable techniques have a wide range of applications, including model selection and adapting to misspecification \cite{foster2020adapting,pacchiano2020model}. These results also imply an approach for online hyperparameter tuning given a finite collection of hyperparameters. In contrast, in this work we study bandit hyperparameter selection over continuous parameter domain. Also, we consider a multitask setting, where the goal is to learn a good hyperparameter setting for a collection of tasks. \cite{ding2022syndicated} developed a corralling style meta algorithm for hyperparameter selection in contextual bandits. \cite{angermueller2020population} provided an algorithm that performs corralling to pick the best bandit algorithm for the design of biological sequences. However, both these works are mostly empirical in nature. \cite{kang2024online} consider the continuous parameter domain, but make Lipschitzness assumptions, which are not guaranteed for arbitrary distributions. In fact, even for Bernoulli arm rewards in the MAB setting, the expected rewards can be shown to be a piecewise constant function of the exploration parameter, which is not Lipschitz, and consequently their zooming algorithm based approach cannot be applied.  


\paragraph{Model selection in Bandits.} \cite{foster2019model} study the related problem of model selection in contextual bandits, where the class of models consist of a finite nested sequence of and obtain regret guarantees that scale with the complexity of the smallest class containing the true model. Further work studies design of  algorithms with optimal regret guarantees \cite{pacchiano2020model, marinov2021pareto, pacchiano2022best, krishnamurthy2021optimal}.

\paragraph{Data-driven Hyper-parameter Selection.} Our techniques  are based on and extend the data-driven algorithm configuration paradigm \cite{gupta2017pac,balcan2017learning,balcan2020data}.
 The approach has been found useful application in hyperparameter tuning with formal guarantees across a  number of applications \cite{balcan2018data,balcan2021data,balcan2021much,blum2021learning,balcan2022provably,bartlett2022generalization,balcan2023analysis,balcan2024learning}. We obtain a novel general derandomization based sample complexity bound motivated by tuning algorithms for stochastic bandit problems, but may be of independent interest. Our application to the bandit learning problem requires a distinction between inter-task and intra-task complexity, while previous applications involve simpler notions of sample complexity. While prior work on online data-driven algorithm design typically makes additional smoothness assumptions which roughly correspond to niceness of the online sequences \cite{balcan2018dispersion,sharma2020learning}, we work with arbitrary (non-smooth) distributions and exploit access to offline data from related tasks. Another distinction from prior work is that we study randomized/stochastic problem instances (given by arm reward distributions) requiring new analytical tools, while prior work typically involves deterministic problem instances (once drawn from the problem distribution).

 \paragraph{Hyperparameter free Bandit Algorithms.} An alternate line of work is the {\it hyperparameter free bandit algorithms} such as UCB-V that achieve optimal regret guarantees~\cite{audibert2009exploration,mukherjee2018efficient,zimmert2021tsallis,ito2021parameter}. But these algorithms have certain hidden hyperparameters that need to tuned appropriately. To the best of our knowledge, there are no bandit algorithms that are truly hyperparameter free while achieving optimal regret guarantees. Besides, our work is tackling a fundamentally different problem compared to some of the works on parameter-free bandits. For instance, \cite{zimmert2021tsallis, ito2021parameter} care about worst-case regret bounds. In contrast, our framework is adaptive and tries to find the best hyperparameter for any given bandit instance. This distinction is important and leads to a huge difference in practical performance (as illustrated in our experiments). For the case of online learning in full information, \cite{cutkosky2017stochastic} proposed hyperparameter free algorithms.

 \paragraph{Distribution Shift.} Our transfer-learning result works for arbitrary collections of arm-reward distributions, i.e. different tasks can have very different arm-rewards, but effectively assumes a fixed but unknown distribution over the tasks. In contrast, work on robustness to distribution shifts typically assumes that the possible arm-reward distributions are similar e.g. in terms of KL-divergence \cite{si2020distributionally,husain2024distributionally}, or Wasserstein distance \cite{shen2023wasserstein}, and the results are therefore meaningful only for small amounts of distribution shifts. They do not need the meta-distribution and are more “worst-case” in that sense. Furthermore, this line of work does not study hyperparameter transfer, instead they study regret minimization and guard against worst case distribution shifts assuming small divergence.

\paragraph{Meta-learning Bayesian Bandits.} The classic setting of Bayesian bandits assumes that the prior is ``known'' in advance. Even the recent work on meta-learning Bayesian bandits \cite{wang2018regret,azizi2023meta} makes strong assumptions about the form of the prior e.g.\ multivariate Gaussian distribution. We do not make any assumptions about the nature of the ``prior”, and we furthermore assume it is unknown to the learner. \cite{peleg2022metalearning} have even further assumptions on known bounds on actions and eigenvalues, and on `monotonicity’ of distribution. Even though our regret definition is similar to Bayesian regret, the sample complexity of the hyperparameter transfer problem that we study is very  different from the Bayesian regret minimization problem setting. In our setting, we get to see data from multiple bandit instances, and the goal is to learn a hyperparameter for future instances. 


\section{Preliminaries}

A stochastic online learning problem with bandit feedback consists of a repeated game played over $T$ rounds. In round $t\in[T]$, 
the player  plays an arm $a_t\in[n]$ from a finite set of $n$ arms and the environment simultaneously selects a reward function $r_t:
[n]\rightarrow\bbR_{\ge0}$. In the stochastic setting, the environment draws a reward vector $\mathbf{r}_t$ as an independent sample from some fixed (but unknown) distribution over $\bbR_{\ge0}^n$, and the reward is simply $\mathbf{r}_t(a)=\mathbf{r}_{ta}$ for $a\in[n]$. Finally, the player observes and accumulates the reward $r_t(a_t)$ corresponding (only) to the selected arm $a_t$.  This   is the well-studied stochastic  MAB  setting. A standard measure of the player's performance is the pseudo-regret given by

$$R_T = \max_{a\in[n]} \bbE\left[\sum_{t=1}^T r_t(x_t,a)-r_t(x_t,a_t)\right]$$
\noindent where the expectation is taken over the randomness of both the player and the environment. The expected average regret is given by $\overline{R_T}:=R_T/T$. Let $\mu_i$ denote the mean reward of arm $i\in[n]$ and $\Delta_i:=\max_j\mu_j - \mu_i$ denote the gap in the mean arm reward relative to the best arm.  We will use the shorthand $\mu_{[n]}\{\mu_{l^*}\rightarrow \mu'_{l^*}\}$ to denote that the $l^*$-th entry of the tuple $\mu_{[n]}=(\mu_1,\dots,\mu_n)$ is updated to $\mu'_{l^*}$.




\section{Impossibility of Hyperparameter Tuning}\label{sec:lower-bound}

We now present lower bounds showing that optimal hyperparameter selection is not always possible. Before we do that, we first quantify the notion of ``optimal'' hyperparameter selection. Consider a family of online learning algorithms $\cA = \{A_\rho: \rho\in\cP \}$, parameterized by hyperparameter $\rho$. An example of $\cA$ is the set of UCB policies, with $\rho$ being the scale parameter multiplying the confidence width. Let $\Pi$ be a collection of stochastic multi-armed bandit problems. In hyperparameter tuning, our goal is to design a meta algorithm ($\widetilde{A}$) for choosing an appropriate $\rho$ which can compete with the best possible hyperparameter, on any problem instance in $\Pi$. To be precise, we want $\widetilde{A}$ to satisfy the following consistency condition for any $P \in \Pi$: $\lim_{T\to\infty} R_T(\widetilde{A}; P)/R_T(A_{\rho^*}; P)=1$.  Here, $\rho^* = \argmin_{\rho\in \cP} R_T(A_{\rho}; P)$ is the best hyperparameter for problem $P$. We note that similar notions of consistency have been studied in the context of hyperparameter selection in offline learning~\cite{kearns1995bound, yang2007consistency}. In fact, hyperparameter selection techniques such as cross validation are known to satisfy such consistency properties.

The following result shows that consistency is not possible even in the simplest problem of MAB with rewards sampled from a Gaussian distribution.
\begin{restatable}{theorem}{thmlb}
    \label{thm:impossibility_hp_tuning}
    Let $\Pi$ be the set of MAB problems with arm rewards sampled from Gaussian distributions with variance belonging to the set $[0, B^2].$ Let $\cA$ be the set of UCB policies, with $\rho$ being the scale parameter multiplying the confidence width. Then for any meta algorithm $\widetilde{A}$, there exists a problem $P\in \Pi$ which satisfies the following bound:
    $\lim_{T\to\infty} R_T(\widetilde{A}; P)/R_T(A_{\rho^*}; P) > 1.$
    \end{restatable}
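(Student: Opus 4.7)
The plan is to prove this impossibility by contradiction. Assume $\widetilde{A}$ achieves $\lim_{T\to\infty} R_T(\widetilde{A}; P)/R_T(A_{\rho^*}; P) = 1$ for every $P \in \Pi$, and derive a contradiction from a pair of carefully chosen witnesses. Restrict attention to two-armed Gaussian MAB instances $P_\sigma$ with arm means $(\mu, 0)$ and common variance $\sigma^2 \in [0, B^2]$; let $P_\sigma'$ denote the arm-swapped version. A standard UCB regret analysis gives a sharp dichotomy: $R_T(A_{\rho^*}; P_0) = \Theta(\mu)$, a constant in $T$ (achieved by $\rho^* = 0$, which plays each arm once then commits greedily), while for every $\sigma > 0$, $R_T(A_{\rho^*}; P_\sigma) = \Theta(\sigma^2 \log T/\mu)$ with $\rho^* = \Theta(\sigma^2)$. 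The jump from constant to logarithmic optimal regret as $\sigma$ leaves $0$ is the tension we will exploit.

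The second step extracts a structural constraint from consistency on the deterministic problems $P_0$ and $P_0'$ (if $\widetilde{A}$'s ratio on one of these fails to converge to $1$, the easier direction yields a witness). Since rewards on $P_0$ are deterministic, the regret equals $\mu \cdot \mathbb{E}_{P_0}[N_2(T)]$ where $N_2(T)$ is the pull count of the suboptimal arm; consistency on $P_0$ forces $\mathbb{E}_{P_0}[N_2(T)] \to 1$, and symmetrically $\mathbb{E}_{P_0'}[N_1(T)] \to 1$ on $P_0'$. The only way for both constraints to hold is for $\widetilde{A}$ to ``pull each arm once at the start, then commit to the arm with the higher observed reward,'' because any additional exploration would over-pull the suboptimal arm on at least one of the two deterministic problems.

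The third step exhibits a witness $P_\sigma \in \Pi$ with $\sigma > 0$ where the commit rule necessarily fails. On $P_\sigma$, the initial observations $(r_1, r_2)$ are Gaussian about $(\mu, 0)$, and with probability $p_\sigma := \Phi(-\mu/(\sqrt{2}\sigma)) > 0$ we have $r_2 > r_1$. Under the commit rule extracted in step 2, $\widetilde{A}$ then plays the suboptimal arm for all $T - 2$ remaining rounds, contributing expected regret at least $(T - 2)\mu\, p_\sigma$. For any fixed $\sigma > 0$, $p_\sigma$ is a positive constant in $T$, so $R_T(\widetilde{A}; P_\sigma) = \Omega(T)$, whereas $R_T(A_{\rho^*}; P_\sigma) = O(\log T)$. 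The ratio therefore diverges to $+\infty$, contradicting consistency on $P_\sigma$.

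The main obstacle is rigorously justifying the ``commit after one pull of each'' rule in step 2: a general meta-algorithm could try to delay its commitment conditionally on observed fluctuations, thereby softening the commit rule on $P_\sigma$ while still satisfying the deterministic constraints. The standard change-of-measure technique cannot be applied directly because $\mathrm{KL}(P_0 \,\|\, P_\sigma) = \infty$. A workaround is to replace $P_0, P_0'$ by small-variance problems $P_{\sigma_1}, P_{\sigma_1}'$ (with $\sigma_1 > 0$ tiny enough that the optimal regret there remains subdominant relative to the eventual wrong-commit cost on the target $P_\sigma$), and then use a Le Cam two-point argument to bound the expected suboptimal-arm pulls on those problems; the target $\sigma$ can be chosen (for instance $\sigma \asymp \mu$) so that $p_\sigma$ is bounded away from $0$, producing the required $\Omega(T)$ lower bound on regret and completing the proof.
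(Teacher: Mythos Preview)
Your plan is quite different from the paper's, and it has a gap that your workaround does not close. The paper never touches the zero-variance problems or any commitment argument. Instead it applies the classical distribution-dependent lower bound (Lai--Robbins / Burnetas--Katehakis, as in Capp\'e et al.): if $\widetilde A$ is ratio-consistent on every $P\in\Pi$ then in particular it has $o(T^\alpha)$ regret on every instance, and for any such algorithm on the model class ``Gaussian with variance anywhere in $[0,B^2]$'' one has $\liminf_T R_T(\widetilde A;P)/\log T\ge\sum_i 2\Delta_i/\log\bigl(1+\Delta_i^2/\mathbf V(\nu_i)\bigr)$. The crucial calculation is $\text{KL}_{\text{inf}}(\nu_i,\mu_1)=\tfrac12\log\bigl(1+\Delta_i^2/\mathbf V(\nu_i)\bigr)$, obtained by minimizing the Gaussian KL over both the mean \emph{and the variance} of the alternative (the minimizing variance is $\Delta_i^2+\mathbf V(\nu_i)$, not $\mathbf V(\nu_i)$). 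Since UCB with the optimally tuned $\rho$ achieves $\limsup_T R_T(A_{\rho^*};P)/\log T\le\sum_i 2\mathbf V/\Delta_i$, the ratio exceeds $1$ by the elementary inequality $x>\log(1+x)$ with $x=\Delta_i^2/\mathbf V>0$. So the witness is any positive-variance instance, both regrets are $\Theta(\log T)$, and the separation is in the leading constant only.

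Your step~2 fails for the reason you identify, but the proposed patch does not repair it. Consistency on $P_0$ pins down $\widetilde A$'s behavior only on a single deterministic reward trajectory, which occurs with probability zero under any $P_\sigma$ with $\sigma>0$; no commit rule on generic reward sequences follows. Replacing $P_0$ by $P_{\sigma_1}$ with small $\sigma_1>0$ does not recover a commit rule either: consistency there gives $\mathbb E_{P_{\sigma_1}}[N_2(T)]\sim(\sigma_1^2/\mu^2)\log T\to\infty$, not $O(1)$. A Le~Cam transfer from $P_{\sigma_1}$ to an alternative in which arm~2 has mean $>\mu$ and the \emph{same} variance $\sigma_1^2$ incurs per-sample KL at least $\mu^2/(2\sigma_1^2)$, so the total KL over $N_2(T)$ pulls is $\Omega(\log T)$ and no contradiction results. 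To make the two-point argument bite you must let the alternative's variance differ from $\sigma_1^2$ and minimize KL over that extra freedom---which is precisely the $\text{KL}_{\text{inf}}$ computation the paper carries out. In particular, your target of $\Omega(T)$ regret on $P_\sigma$ (and ratio $\to+\infty$) is unattainable: a ratio-consistent $\widetilde A$ has $O(\log T)$ regret on every $P_\sigma$ by assumption, so the correct conclusion is a constant-factor gap in the $\log T$ coefficient, not divergence.
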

\noindent The proof builds on standard distribution dependent lower bounds~\cite{cappe2013kullback}, and is located in the appendix. This result motivates our framework which uses offline bandit runs on similar problem instances for hyperparameter transfer. 

\section{Formal framework for transfer learning}\label{sec:framework}

Given a stochastic online learning problem (say multi-armed bandits), let $\Pi$ denote the set of problems (or tasks) of interest. That is, each $P\in \Pi$ defines an online learning problem. For example, if $\Pi$ is a collection of stochastic multi-armed bandit problems, then $P$ could correspond to a fixed product distribution of arm rewards.
We also fix a (potentially infinite) family of online learning algorithms $\cA$, parameterized by a set $\cP\subseteq\R^d$ of $d$  real (hyper-)parameters. Let $A_\rho$ denote the algorithm in the family $\cA$ parameterized by $\rho\in\cP$. For any fixed time horizon $T$, the performance of any fixed algorithm on any fixed problem  is given by some bounded loss metric (e.g.\ the expected average regret of the algorithm) $l_T:\Pi\times\cP\rightarrow[0,H]$\red{is it okay to assume expected  regret is always non-negative?}, i.e.\ $l_T(P,\rho)$ measures the performance on problem  $P\in\Pi$ of algorithm $A_\rho\in\cA$. The utility of a fixed algorithm $A_\rho$ from the family is given by $l^\rho_T:\Pi\rightarrow[0,H]$, with $l^\rho_T(P)=l_T(P,\rho)$. We will be interested in the structure of the {\it dual class} of functions $l^P_T:\cP\rightarrow[0,H]$, with $l^P_T(\rho)=l^\rho_T(P)$, which measure the performance of all algorithms of the family for a fixed problem  $P\in\Pi$. 

We assume an unknown distribution $\cD$ over the problems in $\Pi$. We further have a collection of ``offline'' problems which we can  use to learn a good value of the algorithm parameter $\rho$ that works well on average on a random ``test'' problem drawn from $\cD$. We are interested in the sample complexity of the number of ``offline'' problems that are sufficient to learn a near-optimal $\rho$ over $\cD$. Formally, the learner is given a collection $\{P_1,\dots,P_N\}\sim\cD^N$ of {\it offline} problems for each of which the rewards may be collected according to some policy (not necessarily the same policy as in the test problem) in an online game with time horizon $T_o$, the {\it intra-task}  complexity. The learner learns a hyperparameter $\hat{\rho}$ based on these offline runs.
 A {\it test} problem is given by a random $P\sim \cD$ on which the loss metric $l_T(P,\hat{\rho})$ is measured over an online game of $T$ rounds.  The $(\epsilon,\delta)$ {\it sample complexity} of the learner is the number $N$ of offline problems sufficient to guarantee that learned parameter $\hat{\rho}$ is near-optimal with high probability, i.e.\ with probability at least $1-\delta$,

 $$\Big\lvert\bbE_{P\sim \cD}[l_T(P,\hat{\rho})]-\min_{\rho\in\cP}\bbE_{P\sim \cD}[l_T(P,{\rho})]\Big\rvert\le \epsilon.$$

\subsection{Derandomized dual complexity}

We will define a useful quantity to measure the inherent challenge in learning the best hyperparameter for an unknown problem distribution $\cD$. For any offline problem $P$, let $\mathbf{z}$ denote the random coins used in drawing the contexts and arm rewards according to the corresponding distribution $D_{P}$, and $l^{P,\mathbf{z}}_T(\rho)$ denote the corresponding derandomized dual function, i.e.\ $l^{P}_T(\rho)=\bbE_{\mathbf{z}\sim D_{P}}[l^{P,\mathbf{z}}_T(\rho)]$. Intuitively, we can think of fixing $\mathbf{z}$ as drawing the rewards according to $D_{P}$ for the entire time horizon $T$ in advance (revealed as usual to the online learner) and taking an expectation over $\mathbf{z}$ gives the expected loss or reward according to $P$. More concretely, we have $\mathbf{z}=(z_{ti})_{i\in[n],t\in[T]}$ with each $z_{ti}$ drawn i.i.d. from the uniform distribution over $U([0,1])$. If $D_i$ denotes the reward distribution for arm $i$ and $F_i$ its cumulative density function, then the reward $\mathbf{r}_{ti}$ is given by $F_i^{-1}(z_{ti})$.

For typical parameterized stochastic bandit algorithms, we will show that $l^{P,\mathbf{z}}_T(\rho)$ is a piecewise constant function of $\rho$, i.e.\ the parameter space $\cP$ can be partitioned into finitely many connected regions $\{\cP_i\}$ such that $l^{P,\mathbf{z}}_T(\rho)=\sum_ic_i\I[\rho\in\cP_i]$ where $c_i\in\R$, $\I[\cdot]$ is the 0-1 valued indicator function, $\bigcup_i \cP_i=\cP$ and $\cP_i \cap \cP_j=\emptyset$ for $i\ne j$. Let $q(f)$ denote the number of pieces $\{\cP_i\}$ over which a piecewise constant function $f:\cP\rightarrow\R$ is defined. We define the {\it derandomized dual complexity} of problem distribution $\cD$ w.r.t.\ algorithms parameterized by $\cP$ as follows.

\begin{definition}\label{def:ddc}
    Suppose the derandomized dual function $l^{P,\mathbf{z}}_T(\rho)$ is a piecewise constant function. The derandomized dual complexity of $\cD$ w.r.t.\ $\cP$ is given by $Q_\cD=\bbE_{P\sim \cD}\bbE_{\z\sim D_P}q(l^{P,\mathbf{z}}_T(\cdot))$.
\end{definition}

\noindent $Q_\cD$ provides a distribution-dependent complexity measure that will be useful to bound the sample complexity as well as the intratask complexity of learning the best parameter over $\cD$. Moreover, it may be empirically estimated over a collection of offline problems sampled from $\cD$.

\section{Sample complexity of bandit hyperparameter tuning}\label{sec:sample-complexity}

We proceed to provide a general sample complexity bound for learning one dimensional parameters, i.e.\ $\cP\subset \R$. We state below our result as a uniform convergence guarantee and provide a proof sketch (full proof is located in the appendix).

\begin{restatable}{theorem}{thmintertask}\label{thm:sample-complexity-inter-task} Consider the above setup for any arbitrary $\cD$ and suppose the derandomized dual function $l^{P,\mathbf{z}}_T(\rho)$ is a piecewise constant function.. For any $\epsilon,\delta>0$, $N$ problems $\{P_i\}_{i=1}^N$ sampled from $\cD$ with corresponding random coins $\{\z_i\}_{i=1}^N$ such that $N=O\left(\left(\frac{H}{\epsilon}\right)^2\left(\log Q_\cD +\log \frac{1}{\delta}\right)\right)$ are sufficient to ensure that with probability at least $1-\delta$, for all $\rho\in\cP$, we have that $$\left\vert \frac{1}{N}\sum_{i=1}^N l_T^{P_i, \mathbf{z}_i}(\rho)-\bbE_{P\sim\cD}l_T^{P}(\rho)\right\vert<\epsilon.$$ 
\end{restatable}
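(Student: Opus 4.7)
The plan is to prove this as a uniform convergence statement via symmetrization and Rademacher complexity, where the piecewise-constant structure of the derandomized dual is leveraged to convert the supremum over $\rho \in \cP$ into a maximum over a finite set of cell representatives. The expected size of this set is governed by $Q_\cD$.

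\textbf{Setup.} I view the result as a function class learning problem. Let $\mathbf{x} = (P, \z)$ denote a combined draw and define, for each $\rho \in \cP$, the function $f_\rho(\mathbf{x}) := l_T^{P, \z}(\rho) \in [0, H]$, so that $\bbE[f_\rho(\mathbf{x})] = \bbE_{P \sim \cD} l_T^P(\rho)$. The theorem is then a uniform convergence guarantee for the class $\cF = \{f_\rho : \rho \in \cP\}$ on the i.i.d.\ samples $\{\mathbf{x}_i\}_{i=1}^N$.

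\textbf{Core argument.} First, I apply the classical symmetrization inequality combined with McDiarmid's bounded-differences concentration (each $f_\rho$ takes values in $[0,H]$), which bounds the uniform deviation by $2\,\cR_N(\cF) + H\sqrt{\log(1/\delta)/(2N)}$ with probability at least $1-\delta$, where $\cR_N(\cF)$ is the Rademacher complexity. It remains to bound $\cR_N(\cF)$. For any fixed realization of $\{\mathbf{x}_i\}$, each dual function $l_T^{P_i, \z_i}(\cdot)$ is piecewise constant on the one-dimensional $\cP$ with $q_i := q(l_T^{P_i, \z_i}(\cdot))$ pieces, contributing at most $q_i - 1$ breakpoints, so the common refinement partitions $\cP$ into at most $M := 1 + \sum_{i=1}^N (q_i - 1) \le \sum_i q_i$ cells. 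On each cell, the vector $(l_T^{P_i, \z_i}(\rho))_{i=1}^N \in [0, H]^N$ is constant, and therefore
\[
\sup_{\rho \in \cP} \tfrac{1}{N}\sum_i \sigma_i f_\rho(\mathbf{x}_i) \;=\; \max_{j \le M} \tfrac{1}{N}\sum_i \sigma_i l_T^{P_i, \z_i}(\rho_j)
\]
for any choice of cell representatives $\rho_j$. Massart's finite class lemma applied to this max over $\le M$ vectors of Euclidean norm at most $H\sqrt{N}$ gives $\bbE_\sigma[\,\sup_\rho \tfrac{1}{N}\sum_i \sigma_i f_\rho(\mathbf{x}_i)\,] \le H\sqrt{2 \log M / N}$.

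\textbf{Finishing up.} Taking expectation over $\{\mathbf{x}_i\}$, I apply Jensen's inequality to pull the logarithm outside, and use linearity of expectation together with $\bbE_{P,\z}[q(l_T^{P,\z}(\cdot))] = Q_\cD$ to conclude $\bbE[M] \le N Q_\cD$. This yields $\cR_N(\cF) \le H\sqrt{2 \log(N Q_\cD)/N}$. Combining with the tail term and setting the overall uniform deviation at most $\epsilon$ gives the claimed sample complexity, after absorbing the benign $\log N$ factor (since the resulting $N$ is polynomial in the other parameters) into the $\log Q_\cD$ term.

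\textbf{Main obstacle.} The one technical subtlety is the Jensen step $\bbE\sqrt{\log M} \le \sqrt{\log \bbE M}$, which introduces a $\log N$ slack in the Rademacher bound. A sharper route would stratify samples by the level of $q(\cdot)$ and invoke pseudo-dimension bounds on each stratum (since, uniformly on a stratum with $q \le Q$, the class has pseudo-dimension $O(\log Q)$), but this complicates the accounting; for the bound as stated it is cleanest to absorb the $\log N$ factor as above.
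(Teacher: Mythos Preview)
Your proposal is correct and follows essentially the same route as the paper: reduce the supremum over $\rho$ to a maximum over at most $\sum_i q_i$ cell representatives via the piecewise-constant structure, apply Massart's lemma to bound the empirical Rademacher complexity by $H\sqrt{2\log(\sum_i q_i)/N}$, then use Jensen twice to pass to $H\sqrt{2\log(NQ_\cD)/N}$ and invoke the standard Rademacher generalization bound. The paper handles the residual $\log N$ factor the same way you do, simply absorbing it into the stated $O(\cdot)$ bound.
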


\begin{proofoutline}
Fix a problem  $P\in\Pi$. Fix the random coins $\mathbf{z}$ used to draw the arm rewards according to $D_P$ for the $T$ rounds. 
We will use the piecewise loss structure to bound the  Rademacher complexity, which would imply uniform convergence guarantees by applying standard learning-theoretic results. Let $\rho, \dots, \rho_m$ denote a collection of parameter values, with one parameter from each of the $m\le \sum_{i=1}^Nq(l_T^{P_i, \mathbf{z}_i}(\cdot))$ pieces of the dual class functions $l^{P_i,\mathbf{z}_i}_T(\cdot)$ for $i\in[N]$, i.e.\ across problems in the sample $\{P_1,\dots, P_N\}$ for some fixed randomizations. Let $\cF=\{f_{\rho}:(P,\mathbf{z})\mapsto l^{P,\mathbf{z}}_T(\rho)\mid \rho\in\cP\}$ be a family of functions on a given sample of instances $S=\{P_i,\mathbf{z}_i\}_{i=1}^N$. 

Since the function $f_\rho$ is constant on each of the $m$ pieces, we have the empirical Rademacher complexity,
 $\hat{R}(\cF,S)=\frac{1}{N}\bbE_\sigma\left[\sup_{j\in[m]}\sum_{i=1}^N\sigma_i v_{ij}\right]$,
where $\sigma=(\sigma_1,\dots,\sigma_m)$ is a tuple of i.i.d.\ Rademacher random variables, and $v_{ij}:=f_{\rho_j}(P_i,\mathbf{z}_i)$. Note that $v^{(j)}:=(v_{1j},\dots, v_{Nj})\in[0,H]^N$, and therefore $||v^{(j)}||_2\le H\sqrt{N}$, for all $j\in[m]$. An application of Massart's lemma \cite{massart2000some} now implies $
    \hat{R}(\cF,S)\le H\sqrt{\frac{2\log\sum_{i=1}^Nq(l_T^{P_i, \mathbf{z}_i}(\cdot))}{N}}$.

\noindent Taking average over $S$ and applying the Jensen's inequality, gives a bound on the Rademacher complexity

\begin{align*}
    {R}(\cF,\cD)\le H\sqrt{\frac{2\log N + 2\log Q_\cD}{N}}.
\end{align*}

\noindent Standard Rademacher complexity bounds \cite{bartlett2002rademacher} now imply the desired sample complexity bound.
\end{proofoutline}

\noindent The above result shows that \emph{consistent} hyper-parameter selection is possible as $N\to\infty$, and $\log Q_\cD$ scales sublinearly in $N$. In subsequent sections, we derive explicit bounds on the derandomized dual complexity $Q_{\cD}$ for several key problems of interest. Theorem \ref{thm:sample-complexity-inter-task} not only gives us the inter-task sample complexity, but also reveals an algorithm for finding an $\epsilon$-optimal hyperparameter. In particular, it shows that minimizing $\hat{\rho}:=\argmin_{\rho\in\cP}\sum_{i=1}^{N} l_T^{P_i, \mathbf{z}_i}(\rho)$ is enough to guarantee learning a near-optimal parameter. If all arm rewards are observed in the offline data at every time step (called the ``full information'' setting in online learning), then the intra-task complexity of $T_o=T$ is sufficient to compute $\hat{\rho}$. This is achieved by first estimating $l_T^{P_i, \mathbf{z}_i}(\rho)$ for any $\rho$---by simulating the bandit algorithm using the observed rewards---and then minimizing the above objective. However, a more realistic assumption is that the offline data was gathered under a bandit learning setting, where only the reward for the pulled arm is observed. In this case, as shown in the following Theorem, having sufficiently long intra-task time horizons $T_o$ for the offline tasks allows us to estimate $l_T^{P_i, \mathbf{z}_i}(\rho)$ for any $\rho$.

\begin{restatable}{theorem}{thmintratask}
    \label{thm:abundant_offline_data}There exists an offline policy with $\bbE[T_o]=\min\{n,Q_\cD\}T$ and a hyperparameter tuning algorithm that outputs $\hat{\rho}$ which satisfies the following bound with probability at least $1-\delta$
    \[
    \bbE_{P\sim\cD}[l_T^{P}(\hat\rho)] - \min_{\rho} \bbE_{P\sim\cD}[l_T^{P}(\rho)] \!\leq \!O\!\left(\!\sqrt{\frac{\log{Q_\cD} \!+\! \log{\frac{N}{\delta}} }{N}}\right)\!.
    \]
\end{restatable}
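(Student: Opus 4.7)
The plan is to construct an offline exploration policy that, for each sampled problem $P_i$, simulates the bandit algorithm $A_\rho$ in parallel for every candidate $\rho \in \cP$ within a controlled exploration budget, and then to invoke the uniform convergence guarantee of Theorem~\ref{thm:sample-complexity-inter-task} with the resulting empirical risk minimizer.

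The first step is a \emph{ghost-copy} offline policy. Fix a problem $P_i$ and its associated randomness $\mathbf{z}_i$. Since $l^{P_i, \mathbf{z}_i}_T(\cdot)$ is piecewise constant with $q_i := q(l^{P_i, \mathbf{z}_i}_T(\cdot))$ pieces, there are at most $q_i$ distinct trajectories of arm pulls produced by $A_\rho$ as $\rho$ ranges over $\cP$; pick one representative $\rho$ per piece. At each round $t \in [T]$ I have every representative recommend an arm based on its own history, and let $S_t$ be the set of distinct recommended arms. The offline policy pulls every arm in $S_t$ and feeds the observed reward back to the corresponding ghost copies. Since $|S_t| \le \min\{n, q_i\}$, we obtain $T_o \le \min\{n, q_i\} T$, and averaging over $P \sim \cD$ and $\mathbf{z} \sim D_P$ together with Jensen's inequality yields $\bbE[T_o] \le \min\{n, Q_\cD\} T$. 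This policy can be executed online without knowing the piece structure in advance: start with a single ghost copy and, whenever the algorithm's rule (e.g.\ the UCB index as a function of $\rho$) would differ across two $\rho$ values in a ghost copy's current interval, split that copy at the relevant boundary in $\cP$.

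The second step plugs the offline data into ERM. Because we have observed the reward of every arm any representative would have pulled, we can compute $l^{P_i, \mathbf{z}_i}_T(\rho)$ exactly for each representative, and hence (by piecewise constancy) for every $\rho \in \cP$. Set $\hat{\rho} := \argmin_{\rho \in \cP} \frac{1}{N} \sum_{i=1}^N l^{P_i, \mathbf{z}_i}_T(\rho)$. Theorem~\ref{thm:sample-complexity-inter-task} guarantees that with probability at least $1-\delta$ the empirical mean is within $\epsilon = O\bigl(H\sqrt{(\log Q_\cD + \log(1/\delta))/N}\bigr)$ of the population expectation uniformly in $\rho$, and the standard ERM comparison $\bbE_P[l^P_T(\hat\rho)] - \min_\rho \bbE_P[l^P_T(\rho)] \le 2\epsilon$ gives the advertised rate. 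The additional $\log N$ factor in the theorem is absorbed by a union bound over the $N$ offline tasks, used to upgrade $Q_\cD = \bbE_P\bbE_\mathbf{z}[q_i]$ into a per-task high-probability upper bound on the realized $q_i$, so that the piece-counting step of Theorem~\ref{thm:sample-complexity-inter-task} applies pointwise rather than only in expectation.

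The main obstacle is the design and analysis of the offline policy without a priori knowledge of the piece structure. The key insight exploited is that most candidate values of $\rho$ prescribe the same arm at any given round, so only genuine ``decision boundaries'' within $\cP$ trigger the spawning of new ghost copies; this keeps the per-round pull count bounded by $\min\{n, q_i\}$ and the total $T_o$ controlled in expectation by $Q_\cD$. Once this policy is in place, the uniform convergence and ERM arguments follow directly from Theorem~\ref{thm:sample-complexity-inter-task}, with only routine bookkeeping needed to translate the expectation-based piece-count bound into the claimed high-probability guarantee.
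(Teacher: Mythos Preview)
Your overall strategy---build an offline policy that lets you evaluate $l_T^{P_i,\mathbf z_i}(\rho)$ for every $\rho$, then run ERM and invoke the uniform-convergence bound of Theorem~\ref{thm:sample-complexity-inter-task}---is exactly what the paper does. The offline policy you propose is a bit different and, in a sense, nicer: the paper handles the two regimes separately (if $n\le Q_\cD$, pull every arm $T$ times; if $n>Q_\cD$, run one representative per piece \emph{sequentially} for $T$ rounds each), whereas your parallel ghost-copy scheme covers both cases at once and does not require knowing $Q_\cD$ beforehand. Either policy yields $\bbE[T_o]\le \min\{n,Q_\cD\}T$.

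The one point you should fix is your explanation of the extra $\log N$ in the final bound. It does \emph{not} come from a union bound over the $N$ tasks to control each realized $q_i$; Theorem~\ref{thm:sample-complexity-inter-task} never needs a per-task bound on $q_i$. In the paper, the $\log N$ appears because the proof of Theorem~\ref{thm:sample-complexity-inter-task} bounds the Rademacher complexity via Massart's lemma applied to the total number of pieces $\sum_{i=1}^N q(l_T^{P_i,\mathbf z_i})$, and Jensen's inequality then gives $R(\cF,\cD)\le H\sqrt{(2\log N + 2\log Q_\cD)/N}$. Plugging this directly into the standard Rademacher generalization bound and the usual two-sided ERM comparison yields
\[
\bbE_{P\sim\cD}[l_T^{P}(\hat\rho)] - \min_{\rho} \bbE_{P\sim\cD}[l_T^{P}(\rho)] \le 2H\sqrt{\tfrac{2\log N + 2\log Q_\cD}{N}} + H\sqrt{\tfrac{\log(1/\delta)}{N}} \le 4H\sqrt{\tfrac{\log(N/\delta)+\log Q_\cD}{N}},
\]
which is the stated rate. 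Your proposed Markov-plus-union-bound route could be pushed through, but it is both more work and yields slightly worse constants; the direct Rademacher argument is cleaner.
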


\noindent We now present an offline data collection policy that achieves the above bounds. If the number of arms $n\le Q_\cD$, the offline policy is simply to collect the reward for each of the $n$ arms $T$ times. While this is a reasonable policy when $n$ is small, it is not practical when $n$ is large. However, as we show in the sequel, $Q_{\cD}$ turns out to be quite small, even when $n$ is large. In this case when $Q_\cD<n$, the offline policy operates on the pair $(P,\z)$ by sequentially running the algorithm $A_\rho$ for a single $\rho$ value within each interval where the loss function $l_T^{P,\z}(\cdot)$ is constant.  The algorithm is restarted with a new $\rho$ value after every $T$ rounds. The hyperparameter tuning algorithm simply minimizes $\min_{\rho}\sum_{i=1}^N l_T^{P_i, \mathbf{z}_i}(\rho)$. In the sequel, we assume the offline data is collected from these policies. We defer the investigation of finding the optimal offline data collection policy to future work.


\section{Tuning the exploration parameter}\label{sec:exploration-parameter}

UCB (upper confidence bound) is a well-known algorithm for the  multi-armed bandits problem, which employs simple confidence bands that contain (with high probability) the true mean of the reward distribution for each arm (using Hoeffding's inequality). The key insight is that one can simply pick the arm with the best upper bound on the reward using the confidence band, often aphorized as {\it optimism in the face of uncertainty}. The algorithm is stated in Algorithm \ref{alg:ucb}.\looseness-1

\begin{algorithm}[h] 
    \caption{\textsc{UCB($\alpha$)}\label{alg:ucb}}
    \textbf{Input}: Arms $\{1,\dots,n\}$, max steps $T$\\
    \textbf{Output}: Arm pulls $\{A_t\in[n]\}_{t\in[T]}$\\
    \For{$t=1,\dots,n$}{
        $A_t\gets t$ \tcc*{Pull all arms once.} 
        $\hat{\mu}_{A_t}\gets$ observed reward, $t_{A_t}\gets 1$
    }

    \For{$t=n+1,\dots,T$}{
        $A_t\gets \argmax_{i}\hat{\mu}_i+\sqrt{\frac{\alpha\log t}{t_i}}$\tcc*{$\hat{\mu}_i$ and $t_i$ are the average reward and the number of pulls respectively for arm $i$}
        Update $\hat{\mu}_{A_t}$ and $t_{A_t}$
    }

\end{algorithm}

The exploration-exploitation trade-off of the above algorithm can be tuned using the exploration parameter $\alpha$. Intuitively larger values of $\alpha$ lead to larger confidence bands and therefore more exploration. Classic analysis of the problem give bound on data-independent worst-case regret bounds for $\alpha\ge 1$, and the bounds are tightest for $\alpha=1$.
  However, the best value of $\alpha$ (for a fixed time horizon $T$) is data-dependent and could even be less than 1. This is indeed observed in our empirical validation (see Figure \ref{fig:two-arm-alpha}), where we run UCB for 2-arm bandits ($n=2$) and observe that value of $\alpha$ that optimizes the exploration-exploitation trade-off depends on the data-distribution. 


     

\red{Potential next steps: Similar visualizations of parameter dependence for *-UCB algorithms of interest.}

This motivates the following problem.
Can we identify a `good' value of $\alpha$ for a set of related problems, where each problem corresponds to a fixed arm reward distribution, and the problem itself is sampled from some problem distribution?
One way to formalize this in our context is to consider a {\it training} or {\it offline} phase (say beta version of a news recommendation system \cite{li2010contextual}, or a training phase of architecture search for networks trained on available datasets) where we can learn the exploration parameter and a {\it test} or {\it online} phase (real deployment of the news recommendation system, or held-out  datasets for architecture search) where we employ the learned parameter. A reasonable goal then would be to make the training phase efficient (amount of data needed) and the test phase effective (good learned parameter). Besides the exploration parameter, one could also learn an initialization of the arm means from the offline data, although learning the exploration parameter is still important as the following example illustrates.

\begin{figure}[t]
\centering
 \includegraphics[scale=0.5]{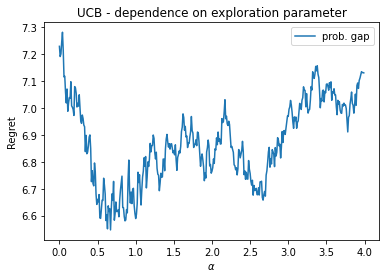}
     
    \caption{Variation of (estimated) expected regret with the exploration parameter $\alpha\in[0,4]$ for two-arm stochastic bandits for symmetric Gaussian distributions.}
    \label{fig:two-arm-mixture}
\end{figure}

\begin{example}
Consider a two-armed bandit problem with two distributions in $\Pi$: $\cD_1$ which places rewards $\cN(\mu_1,\sigma)$ on arm 1 and $\cN(\mu_2,\sigma)$ on arm 2, and $\cD_2$ which places rewards $\cN(\mu_2,\sigma)$ on arm 1 and $\cN(\mu_1,\sigma)$ on arm 2. If we may encounter $\cD_1$ or $\cD_2$ with equal probability, by symmetry no arm is better than the other. Moreover, for $\mu_2=-\mu_1$, we do not learn useful information for initializing mean estimates for the arms. Also, by symmetry, the same exploration parameter is optimal for both $\cD_1$ and $\cD_2$, and therefore for the problem distribution as well. Learning the exploration parameter is  useful  (Figure \ref{fig:two-arm-mixture}).
\end{example}


\noindent We have the following general bound on the derandomized dual complexity $Q_\cD$ for arbitrary distributions $\cD$ over a collection of $n$-arm bandit problems.

\begin{restatable}{theorem}{thmucb}\label{thm:ucb}
    Let $\Pi$ be a collection of multi-armed bandit problems with $n$ arms and $\cD$ be an arbitrary distribution over $\Pi$. Then, for Algorithm \ref{alg:ucb} parameterized by $\alpha$, we have $\log Q_\cD = O(n\log T)$.
\end{restatable}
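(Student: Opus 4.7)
\begin{proofoutline}
The plan is to prove the pointwise bound $q(l_T^{P,\mathbf{z}}(\cdot))\le T^{O(n)}$ for every problem $P\in\Pi$ and every realization $\mathbf{z}$ of the rewards, which immediately gives $\log Q_\cD=O(n\log T)$ after averaging. Once $\mathbf{z}$ is fixed, \textsc{UCB}($\alpha$) is a deterministic function of $\alpha$, and the trajectory $(A_1(\alpha),\dots,A_T(\alpha))$ changes only at finitely many critical values $\alpha^*$, which partition $[0,\infty)$ into intervals on which the loss is constant. My approach is to bound the number of such critical values by a combinatorial signature argument.

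The core observation is that for any two adjacent pieces in $\alpha$-space, their trajectories first differ at some time $t^*\ge n+1$; on the common prefix, both pieces have the same pull-count tuple $(t_1,\dots,t_n)$ and hence the same empirical means $(\hat\mu_1,\dots,\hat\mu_n)$. At time $t^*$ the two pieces pull distinct arms $i,j$, which forces the UCB-tie equation at the boundary $\alpha^*$:
\[
\hat\mu_i+\sqrt{\frac{\alpha^*\log t^*}{t_i}}=\hat\mu_j+\sqrt{\frac{\alpha^*\log t^*}{t_j}}.
\]
This is linear in $\sqrt{\alpha}$ and hence has a unique positive solution (the degenerate case $t_i=t_j$ forces $\hat\mu_i=\hat\mu_j$, in which case the tie holds for all $\alpha$ and does not produce a genuine boundary). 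Therefore each boundary is uniquely identified by its signature $(t^*,s^*,\{i,j\})$, where $s^*$ denotes the common state at time $t^*$, so distinct boundaries give distinct signatures.

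Counting signatures then reduces to a combinatorial exercise. The states $s^*$ at time $t^*$ correspond to compositions of $t^*-1$ into $n$ positive integers, of which there are $\binom{t^*-2}{n-1}$; pairing with $\binom{n}{2}$ arm pairs and summing over $t^*\in\{n+1,\dots,T\}$ gives $O(T^n n^2)$ signatures via the hockey-stick identity. Therefore $q(l_T^{P,\mathbf{z}}(\cdot))\le O(T^n n^2)+1$, so $\log q=O(n\log T)$ uniformly over $\mathbf{z}$, which implies $\log Q_\cD=O(n\log T)$.

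The main technical subtlety I expect to grapple with is justifying that $s^*$ really captures all the information the algorithm uses: the per-time derandomization literally written in Section~\ref{sec:framework} lets the reward $\mathbf{r}_{ti}$ depend on the specific time-step $t$ at which arm $i$ is pulled, so naively two trajectories with the same pull counts could see different empirical means and the signature count would break. The cleanest fix is to replace the per-time coupling with the equivalent per-pull coupling (the $k$-th pull of arm $i$ has a predetermined reward), which induces the same distribution over trajectories and hence the same $Q_\cD$ in expectation; this is the step where I expect to need to be most careful in the full proof.
\end{proofoutline}
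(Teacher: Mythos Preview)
Your proposal is correct and follows essentially the same approach as the paper: fix the randomness, locate critical values of $\alpha$ via the UCB-tie equation, observe that each such critical value is determined by the pull-count tuple $(t_1,\dots,t_n)$ at the time of divergence, and count those tuples via stars-and-bars to get $T^{O(n)}$ pieces. The paper's proof is terser but structurally identical; notably, you are more explicit than the paper about the per-time versus per-pull coupling issue---the paper's proof in the appendix silently adopts the per-pull coupling (``reveals $\mathbf{r}_{a_t}[t_{a_t}]$'') despite Section~\ref{sec:framework} defining $\mathbf{z}$ per-time, so your care on that point is well placed rather than a new complication.
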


\noindent A proof of the above result appears in the appendix. We remark that our above result makes use of the fact that the arm rewards in the stochastic bandit problem are iid, which implies that the derandomized dual function has polynomially many discontinuities in $T$ (for fixed $n$). Without this assumption, the number of discontinuities can be as large as $2^{\Omega(T)}$ even for the 2-arm case. 

\begin{algorithm}[t] 
    \caption{\textsc{TunedUCB}($\alpha_{\min},\alpha_{\max}$)\label{alg:tuned-ucb}}
    \textbf{Input}: Parameter interval $[\alpha_{\min},\alpha_{\max}]$, Arm rewards $r_{ijk}, i\in[n], j\in [T]$, $k\in[N]$ from offline data\\
    \textbf{Output}: Learned parameter $\hat{\alpha}$\\
    \For{each problem instance $k\in[N]$}{
    \For{each arm $i\in[n]$}{ 
    $t_i\gets 1$\\
    $R_{i}[-1]\gets (r_{i2k},\dots,r_{iTk})$}
    $A_k\gets$ \textsc{$\alpha$-CriticalPoints}($\alpha_{\min},\alpha_{\max}, (t_1,\dots,t_n), (r_{11k},\dots,r_{n1k}), (R_{1}[-1],\dots,R_{n}[-1])$)\tcc*{Alg \ref{alg:ucb-tuner}, computes critical points for $l_T^{P_k, \mathbf{z}_k}(\alpha)$ on fixed instance $k$.}
    }
    \textbf{return} $\argmin_{\alpha\in\{\alpha_{\min},\alpha_{\max}\}\cup A_1 \cup  \dots \cup  A_N} \sum_{k=1}^Nl^{P_k,\mathbf{z}_k}_T(\alpha)$
    
\end{algorithm}

By Theorems \ref{thm:sample-complexity-inter-task} and \ref{thm:abundant_offline_data}, the above result implies an inter-task sample complexity of $\Tilde{O}(\frac{n\log T}{\epsilon^2})$ and intra-task complexity of $O(nT)$ to learn $\epsilon$-optimal parameter $\alpha$. The sample complexity bound can be achieved by  the ERM algorithm (Algorithm \ref{alg:tuned-ucb}). The key challenge in implementing the ERM is that it involves a minimization over infinitely many $\alpha$, which we resolve by computing the critical points where the sequence of selected arms can change using a recursive procedure. 

\begin{algorithm}[h] 
    \caption{\textsc{$\alpha$-CriticalPoints}($\alpha_{l},\alpha_{h}, t_{[n]}, \mu_{[n]}, R_{[n]}$)\label{alg:ucb-tuner}}
    \textbf{Input}: Parameter interval $[\alpha_{\min},\alpha_{\max}]$, Arm pulls so far $t_i$, Mean rewards so far $\mu_i$, Future arm rewards $R_{i}, i\in[n]$.\\
    \textbf{Output}: Learned parameter $\hat{\alpha}$.\\
    \If{\textsc{length}$(R_i)=0$ for some $i\in[n]$}{\textbf{return}  $\emptyset$}
    $l^*\gets \argmax_{i\in[n]} {\mu}_i+\sqrt{\frac{\alpha_l\log \sum_{j=1}^nt_j}{t_i}}$\\
    $\alpha_{\textsc{next}}\gets \min_{i\in[n],i\ne l^*}\frac{1}{\sum_{j=1}^nt_j}\left(\frac{\mu_{l^*}-\mu_i}{\frac{1}{\sqrt{t_i}}-\frac{1}{\sqrt{t_{l^*}}}}\right)^2$\\
    $\mu'_{l^*}\gets \frac{\mu_{l^*}t_{l^*}+R_{l^*}[0]}{t_{l^*}+1}$\\
    $\alpha^*\gets\min\{\alpha_h, \alpha_{\textsc{next}}\}$\\
        $A_1\gets \alpha\textsc{-CriticalPoints}(\alpha_{l},\alpha^*, t_{[n]}\{t_{l^*}\rightarrow t_{l^*}+1\}, \mu_{[n]}\{\mu_{l^*}\rightarrow \mu'_{l^*}\}, 
         R_{[n]}\{R_{l^*}\rightarrow R_{l^*}[-1]\})$\\
    \If{$\alpha_{\textsc{next}}\ge \alpha_{h}$}{
    
      \textbf{return }  $A_1$ 
    }{
        $A_2\gets \textsc{$\alpha$-CriticalPoints}(\alpha_{\textsc{next}},\alpha_{h}, t_{[n]}, \mu_{[n]}, 
        R_{[n]})$\\
    \textbf{return } $A_1\cup\{\alpha_{\textsc{next}}\}\cup A_2$
    }
\end{algorithm}

\textbf{Algorithm \ref{alg:ucb-tuner}.} An important challenge in implementing ERM is that it involves a minimization over infinitely many $\alpha$'s. We address this by computing the critical points (i.e., points of discontinuities) of the piecewise constant function $l_T^{P_k, \mathbf{z}_k}(\alpha)$. These critical points occur when a slight change to $\alpha$ changes the choice of the arm in UCB (at any time step).  Algorithm~\ref{alg:ucb-tuner} provides an efficient way (runtime proportional to  actual number of discontinuities, i.e. expected time complexity is $O(\cQ_\cD)$) to calculate these critical points, making the ERM approach practical. The key idea is to recursively compute the critical points in any interval $[\alpha_l, \alpha_h]$, by locating the first point $\alpha_{\textsc{next}}$ at which an arm different from the best arm for the left end point $\alpha_l$ is selected.

We can further sharpen the above bound in the case of Bernoulli (more generally categorially) distributed arm rewards. A proof of the following theorem is located in the appendix.

\begin{restatable}{theorem}{thmucbcategorial}\label{thm:ucb-categorical}
    Let $\Pi$ be a collection of multi-armed bandit problems with $n$ arms and $\cD$ be an arbitrary distribution over $\Pi$ such that the arm rewards take categorical values in $\{0,1,\dots,K-1\}$. Then $\log Q_\cD = O(\log KT)$ for \textsc{UCB($\alpha$)}.
\end{restatable}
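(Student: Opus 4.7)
The plan is to sharpen the argument of Theorem~\ref{thm:ucb} by exploiting the fact that, for categorical rewards in $\{0,1,\dots,K-1\}$, each arm's empirical mean can only take on an integer-parameterized set of rational values. First, I would observe that $l_T^{P,\mathbf{z}}(\alpha)$ is piecewise constant, with pieces separated by \emph{critical} values $\alpha^*$ at which the UCB trajectory changes. For any such $\alpha^*$, let $t^*$ be the first round at which the trajectories for $\alpha^*-\varepsilon$ and $\alpha^*+\varepsilon$ diverge. Since both trajectories agree through round $t^*-1$ on a fixed $(P,\mathbf{z})$, they share identical pull counts $t_i$ and empirical means $\hat{\mu}_i$ just before round $t^*$, yet pick different arms at round $t^*$. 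Continuity of the UCB indices in $\alpha$ then forces a tie between the two arms $i^*, j^*$ in question:
\[
\hat{\mu}_{i^*} + \sqrt{\frac{\alpha^*\log t^*}{t_{i^*}}} \;=\; \hat{\mu}_{j^*} + \sqrt{\frac{\alpha^*\log t^*}{t_{j^*}}}.
\]

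Second, the categorical assumption lets me encode each arm's state by a pair of integers. After $t_i$ pulls of arm $i$ we have $\hat{\mu}_i = s_i/t_i$ for some integer $s_i \in \{0,1,\dots,(K-1)t_i\}$. Substituting into the tie equation and solving for $\alpha^*$ yields a closed-form expression in the integer tuple $(t^*, t_{i^*}, t_{j^*}, s_{i^*}, s_{j^*})$ alone: neither the identity of the arms $i^*, j^*$ nor the state of the remaining $n-2$ arms enters. This is precisely what removes the $n$-dependence present in the bound for continuous rewards.

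Third, I would count the distinct tuples. Since $t^* \le T$, and each pair $(t_i, s_i)$ ranges over $\sum_{t_i=1}^T ((K-1)t_i+1) = O(KT^2)$ choices, the total number of tuples is at most $T \cdot (KT^2)^2 = O(K^2 T^5)$. Each distinct critical value $\alpha^*$ must correspond to a distinct tuple (since a fixed tuple gives a unique $\alpha^*$), so the number of critical values, and hence $q(l_T^{P,\mathbf{z}}(\cdot))$, is at most $O(K^2 T^5) + 1$. Taking expectations over $P \sim \cD$ and $\mathbf{z} \sim D_P$ gives $Q_\cD = O(K^2 T^5)$, so $\log Q_\cD = O(\log K + \log T) = O(\log(KT))$.

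The main piece to justify carefully is the claim underlying the first step: that every boundary between pieces of $l_T^{P,\mathbf{z}}$ arises from a two-arm tie of the described form. This hinges on the derandomization; $\mathbf{z}$ predetermines the reward each arm would return at each round, so trajectories which agree up to round $t^*-1$ see identical state at round $t^*$, and the only way their argmax of UCB indices can differ at round $t^*$ is through a tie. Once this is in hand, the rest is elementary counting over integer tuples; no concentration or regret analysis is needed.
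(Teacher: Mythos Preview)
Your proposal is correct and follows essentially the same approach as the paper: both arguments identify critical values of $\alpha$ as solutions of the two-arm tie equation, exploit the categorical rewards to parameterize each arm's state by an integer pair $(t_i,s_i)$, and then count tuples to get a polynomial bound in $K$ and $T$. Your count is in fact marginally tighter ($O(K^2T^5)$ versus the paper's $K^2T^7$, since the paper separately enumerates $t_i,t_j$ and $\hat\mu_i,\hat\mu_j$ rather than the joint pairs), but both yield $\log Q_\cD = O(\log KT)$.
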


\noindent {\bf LinUCB.} A similar analysis can be carried out for the LinUCB algorithm for the stochastic contextual bandits problem (Algorithm \ref{alg:linucb}). In this setting, the learner observes a context vector $(x_{t,i})_{i\in[n]}$ in each round $t$, assumed to be drawn from a fixed, unknown distribution, and the reward distribution $D_{t,i}$ for the arm $i$ depends on the context $x_{t,i}$. We obtain the following bound on the inter-task sample complexity.

\begin{restatable}{theorem}{thmlinucb}\label{thm:linucb}
    Let $\Pi$ be a collection of contextual bandit problems with $n$ arms and $\cD$ be an arbitrary distribution over $\Pi$. Then, for \textsc{LinUCB($\alpha$)} (i.e., Algorithm \ref{alg:linucb}), we have $\log Q_\cD = O(T\log n)$.
\end{restatable}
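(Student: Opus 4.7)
\begin{proofoutline}
The plan is to fix a problem $P$ and the random coins $\z$ (which determine the contexts and rewards at every time step and for every arm), and argue directly that the derandomized dual function $\alpha\mapsto l_T^{P,\z}(\alpha)$ is piecewise constant with at most $n^T$ pieces. Since this bound will be deterministic in $\z$, plugging it into Definition \ref{def:ddc} immediately yields $Q_\cD \leq n^T$ and hence $\log Q_\cD = O(T\log n)$.

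The key structural observation I would establish is that, given a fixed history of arm choices $a_1,\dots,a_{t-1}$, the design matrix $A_i$ and the regularized least-squares estimate $\hat{\theta}_i$ maintained by LinUCB at round $t$ are determined entirely by $\z$ and the history, and in particular do not depend on $\alpha$. Because the context $x_{t,i}$ is likewise fixed by $\z$, the UCB score for arm $i$ at round $t$ takes the form $c_i + d_i\alpha$, where $c_i = x_{t,i}^{\top}\hat{\theta}_i$ and $d_i = \sqrt{x_{t,i}^{\top} A_i^{-1} x_{t,i}}$ depend only on $\z$ and the past arm choices. Consequently, the arm selected by LinUCB at round $t$ is determined by the upper envelope of $n$ affine functions of $\alpha$, which partitions $\cP$ into at most $n$ intervals on each of which a single arm is maximal.

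I would then set up a recursive subdivision of $\cP$, round by round. Initially $\cP$ is a single region on which no arm has been chosen. At round $t$, within each existing region---where $(a_1,\dots,a_{t-1})$ is constant in $\alpha$---the upper-envelope argument above splits the region into at most $n$ sub-regions, one per possible value of $a_t$. By induction, after round $T$ there are at most $n^T$ regions, and on each region the full trajectory $(a_1,\dots,a_T)$, and therefore $l_T^{P,\z}(\alpha)$, is constant in $\alpha$. This gives $q(l_T^{P,\z}(\cdot)) \leq n^T$ pointwise in $\z$, from which the theorem follows.

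The main obstacle, I expect, is to justify carefully that $\hat{\theta}_i$ and $A_i$ depend only on the \emph{discrete} past arm choices (and on $\z$) rather than on $\alpha$---so that on each region the UCB is genuinely affine---and to track the inductive bookkeeping correctly so that the factor of $n$ compounds multiplicatively across rounds. A secondary subtlety is handling degenerate cases where two arms tie on an entire sub-interval of $\alpha$; a consistent tie-breaking convention resolves this without inflating the piece count. It is worth noting that, in contrast to Theorem \ref{thm:ucb}, the bound here is exponential in $T$ rather than polynomial: because different arm sequences with the same pull counts generally induce different $\hat{\theta}_i$ (since the contexts at different rounds differ), the state-collapsing trick that produces $O(n\log T)$ in the context-free UCB analysis is unavailable for LinUCB.
\end{proofoutline}
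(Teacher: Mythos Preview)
Your proposal is correct and follows essentially the same approach as the paper: fix $(P,\z)$, observe that conditioned on the history $(a_1,\dots,a_{t-1})$ the UCB score at round $t$ is affine in $\alpha$ so the arm selected is governed by an upper envelope of $n$ lines, and then inductively subdivide to obtain at most $n^T$ pieces. One small remark: in the paper's Algorithm~\ref{alg:linucb} the matrix $K$ and estimate $\theta_t$ are \emph{shared} across arms rather than per-arm as your $A_i,\hat\theta_i$ notation suggests, but this does not affect the argument---the crucial point, which you correctly identify, is that these quantities depend only on $\z$ and the discrete history, not on $\alpha$.
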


\begin{algorithm}[b] 
    \caption{\textsc{LinUCB($\alpha$)}\label{alg:linucb}}
    \textbf{Input}: Arms $\{1,\dots,n\}$, max steps $T$, feature dimension $d$\\
    \textbf{Output}: Arm pulls $\{A_t\in[n]\}_{t\in[T]}$\\
    $K\gets I_d$, $b\gets 0_d$

    \For{$t=1,\dots,T$}{
        $\theta_t\gets K^{-1}b$\\
        Observe features $x_{t,1},\dots,x_{t,n}\in\R^d$\\
        $A_t\gets \argmax_{i}\theta_t^Tx_{t,i}+\alpha\sqrt{x_{t,i}^TK^{-1}x_{t,i}}$\tcc*{upper confidence bound}
        Observe payoff $p_{t,A_t}$\\
        Update $K\gets K+x_{t,A_t}^Tx_{t,A_t}$ and $b\gets b+x_{t,A_t}^Tp_{t,A_t}$
    }

\end{algorithm}

\noindent The above results apply even when the set of arms changes across different problems in $\Pi$, with $n$ being an upper bound on the number of arms in any problem in $\Pi$. Suppose now that we have a common set of $n$ arms across all the problems in $\Pi$.
In this case, in addition to learning the exploration parameter $\alpha$, we can also learn how to initialize the arm means in the following variant of UCB that incorporates arm priors as  hyperparameters.

Suppose $\alpha\in[\alpha_{\min},\alpha_{\max}]$ be the exploration parameter and $\mathbf{\hat{\mu}^0}=\{\hat{\mu}^0_1,\dots,\hat{\mu}^0_n\}\in\R_{\ge0}^n$ denote prior (initial) arm means. For  any problem  $P\in\Pi$ and randomization $\mathbf{z}$, define the dual class functions $l^{P}_T(\alpha,\mathbf{\hat{\mu}^0})$ and $l^{P,\mathbf{z}}_T(\alpha,\mathbf{\hat{\mu}^0})$ as above. We have the following sample complexity result for learning the hyperparameters $\alpha,\mathbf{\hat{\mu}^0}$ simultaneously.

\begin{restatable}{theorem}{thmucbprior}\label{thm:sample-complexity-with-prior} Consider the above setup for any arbitrary $\cD$. For any $\epsilon,\delta>0$, $N$ problems $\{P_i,\mathbf{z}_i\}_{i=1}^N$ sampled from $\cD$ with $N=O\left(\left(\frac{H}{\epsilon}\right)^2\left((n+T)T\log n +\log \frac{1}{\delta}\right)\right)$ are sufficient to ensure that with probability at least $1-\delta$, for all $\alpha\in[\alpha_{\min},\alpha_{\max}]$, we have that $$\left\vert \frac{1}{N}\sum_{i=1}^N l_T^{P_i, \mathbf{z}_i}(\alpha,\mathbf{\hat{\mu}^0})-\bbE_{P\sim\cD}l_T^{P}(\alpha,\mathbf{\hat{\mu}^0})\right\vert<\epsilon.$$ 
\end{restatable}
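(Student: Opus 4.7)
The plan is to extend the Rademacher-complexity framework from Theorem \ref{thm:sample-complexity-inter-task} from the one-dimensional parameter $\rho$ to the $(n{+}1)$-dimensional parameter $(\alpha,\hat{\mu}^0)\in[\alpha_{\min},\alpha_{\max}]\times\R_{\ge 0}^n$. The main task is to bound the growth function of the piecewise-constant dual class in $\R^{n+1}$.

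First I would fix $(P,\z)$ and characterize the piecewise structure of $l^{P,\z}_T(\alpha,\hat{\mu}^0)$. Incorporating the prior via the running mean $\hat{\mu}_i(t) = (\hat{\mu}^0_i + \sum_{s<t,\,A_s=i} r_s)/(1 + t_i(t))$ and substituting $\beta=\sqrt{\alpha}$, each pairwise comparison of UCB scores at time $t$ becomes a linear inequality in the $(n{+}1)$-dimensional vector $(\beta,\hat{\mu}^0_1,\dots,\hat{\mu}^0_n)$, with coefficients determined by the realization $\z$ and by the arm-pull counts prior to time $t$. Thus $l^{P,\z}_T$ is piecewise constant, with each piece corresponding to a realized arm sequence $(A_1,\dots,A_T)\in[n]^T$ and defined by the intersection of at most $T(n-1)$ half-spaces in $\R^{n+1}$.

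Next I would bound the growth function of $\{l^{P_i,\z_i}_T\}_{i=1}^N$ on the sample. The number of arm sequences gives a crude per-instance piece count of $n^T$, contributing the factor $T\log n$. Stratifying by arm sequence, each piece is a convex polytope in $\R^{n+1}$; combining the realized hyperplane arrangements across the $N$ samples and counting regions via standard arrangement bounds in $\R^{n+1}$ yields a bound on the number of distinct label vectors of the form $\exp(O((n+T)T\log n))$. The Rademacher argument from the proof of Theorem \ref{thm:sample-complexity-inter-task} then applies verbatim: Massart's lemma produces an empirical Rademacher complexity of order $H\sqrt{(n+T)T\log n / N}$, and standard generalization bounds yield the stated inter-task sample complexity.

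The main obstacle is the history dependence of the boundaries: the hyperplane separating two arm choices at time $t$ has coefficients that depend on $(A_1,\dots,A_{t-1})$, which is itself a function of $(\alpha,\hat{\mu}^0)$, so the boundaries are only piecewise-linear (not globally linear) in the parameters. To handle this, I would stratify the parameter space by arm sequence: within each stratum the boundaries at times $1,\dots,T$ are genuine hyperplanes in $\R^{n+1}$, so the full arrangement is the union of realized hyperplanes over strata, and the overall piece count is controlled by combining the $n^T$ outer enumeration with arrangement bounds in $\R^{n+1}$ to give the claimed $(n+T)T\log n$ growth exponent.
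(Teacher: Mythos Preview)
Your approach is sound but genuinely different from the paper's. The paper does \emph{not} extend the Rademacher argument of Theorem~\ref{thm:sample-complexity-inter-task}; instead it invokes the GJ (Goldberg--Jerrum) pseudodimension machinery of \cite{bartlett2022generalization}. Concretely, it squares the UCB comparison to obtain predicates of the form $(\alpha\log t)\bigl(1/\sqrt{t_j}-1/\sqrt{t_i}\bigr)^2 \gtrless \bigl((\hat\mu^0_i+\hat\mu^{1:t}_i)/t_i-(\hat\mu^0_j+\hat\mu^{1:t}_j)/t_j\bigr)^2$, introduces $\{\log t,\sqrt t : t\le T\}$ as $O(T)$ auxiliary ``constant'' inputs so that each predicate becomes a degree-$4$ polynomial in $n+1+O(T)$ real variables, counts $O(T^3 n^T)$ predicates by enumerating histories, and reads off pseudodimension $O((n+1+T)\cdot\log(4\cdot T^3 n^T))=O((n+T)T\log n)$ from the GJ black box. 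Your substitution $\beta=\sqrt\alpha$ is cleaner---it avoids both the squaring and the auxiliary inputs and lands directly in a hyperplane arrangement in $\R^{n+1}$---and, carried through, actually yields a \emph{tighter} bound: for fixed $\z$ each comparison hyperplane is determined by $(t,i,j,t_i,t_j)$ alone (the cumulative reward of arm $i$ depends only on $t_i$), so the $n^T$ ``outer enumeration'' by arm sequence is unnecessary and the cell count is governed by $O(NT^3n^2)$ hyperplanes in $\R^{n+1}$. The one soft spot in your write-up is the final arithmetic: nothing in your arrangement-in-$\R^{n+1}$ argument produces the additive $T$ in the factor $(n+T)$; that term is an artifact of the paper's $O(T)$ auxiliary GJ inputs, and your route naturally delivers an exponent of order $nT\log n$ (or better), not $(n+T)T\log n$.
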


\noindent A full proof is located in the appendix, and employs techniques due to \cite{bartlett2022generalization}.

\section{Tuning the noise parameter in GP-UCB}\label{sec:gpucb}

Many problems in reinforcement learning --- for example choosing what ads to display to maximize profit in a click-through model \cite{pandey2006handling}, determining the optimal control strategies for a robot \cite{lizotte2007automatic}, hyperparameter tuning of large machine learning models \cite{bergstra2011algorithms}  --- can be formulated as optimizing an unknown noisy function $f$ that is expensive to evaluate. Seminal work of \cite{srinivas2010gaussian} proposed a simple and intuitive Bayesian approach for this problem called the Gaussian Process Upper Confidence Bound (GP-UCB) algorithm and, under implicit smoothness assumptions on $f$, showed that their algorithm achieves no-regret when optimizing $f$ by a sequence of online evalutations. Their setup formally generalizes the bandit linear optimization problem. 
A crucial parameter of this algorithm is the noise variance parameter $\sigma^2$ which is typically manually tuned. But for many of the above applications one typically ends up repeatedly solving multiple related problem instances. Learning a value of $\sigma^2$ that works well across multiple problem instances is of great practical interest.


\subsection{Setup}
Consider the problem of maximizing a real-valued function $f:\cC\rightarrow \R$ over domain $\cC$ online. In each round $t=1,\dots,T$, the learner selects a point $x_t\in\cC$. The learner wants to maximize $\sum_{t=1}^Tf(x_t)$, and its performance is measured relative to the best fixed point $x^*=\argmax_{x\in\cC} f(x)$. The {\it instantaneous regret} of the learner is defined as $r_t=f(x^*)-f(x_t)$ and the cumulative regret as $R_T=\sum_{t=1}^Tr_t$.


\begin{algorithm}[h] 
    \caption{\textsc{GP-UCB($\sigma^2$)}\label{alg:gp-ucb} \cite{srinivas2010gaussian}}
    \textbf{Input}: Input space $\cC$, GP prior $\mu_0=0,\sigma_0,$ kernel $k(\cdot,\cdot)$ such that $k(\x,\x')\le 1$ for any $\x,\x'\in\cC$, $\{\beta_t\}_{t\in[T]}$.\\
    \textbf{Output}: Point $\{\x_t\in\cC\}_{t\in[T]}$\\

    \For{$t=1,\dots,T$}{
        $\x_t\gets \argmax_{\x\in\cC}{\mu}_{t-1}(\x)+\sqrt{\beta_t}\sigma_{t-1}(\x)$\\
        Observe $y_t=f(\x_t)+\epsilon_t$, $\epsilon_t\sim N(0,\sigma^2)$\\
        $\bk_t(\x)=[k(\x_1,\x) \dots k(\x_t,\x)]^T\\ \K_t=[k(\x_i,\x_j)]_{i,j\in[t]}$\\
        Update $\mu_t(\x)=\bk_t(\x)^T(\K_t+\sigma^2 I)^{-1}\y_t$, where $\y_t=[y_1 \dots y_t]^T$\\
        Update $\sigma_t(\x)=k(\x,\x)-\bk_t(\x)^T(\K_t+\sigma^2 I)^{-1}\bk_t(\x)$
    }

\end{algorithm}

The parameter $\sigma^2$ here is the noise variance, which is unknown to the learner but needed by the algorithm. An upper bound on the true noise $\sigma^2$ is sufficient for the algorithm to work, but a loose upper bound can weaken the regret guarantees of \cite{srinivas2010gaussian}. In practice, the parameter is set heuristically for each problem. 

We consider a set-up similar to the multi-armed bandits problem above. Consider the parameterized family of GP-UCB algorithms given by Algorithm \ref{alg:gp-ucb} with parameter $\sigma^2=s\in [s_{\min},s_{\max}]$ for some $0<s_{\min}<s_{\max}<\infty$. Let $\cD$ be a distribution over some  problems, i.e.\ a distribution over noisy (random) real-valued functions $f:\cC\rightarrow[0,H]$ with $\cC\subset\R^d$. It is typical to discretize the domain $\cC$ when computing the $\argmax$ in Algorithm \ref{alg:gp-ucb}, and usually $f(\cdot)$ is more expensive to evaluate than the UCB acquisition function $a_{t}(\x):={\mu}_{t}(\x)+\sqrt{\beta_t}\sigma_{t}(\x)$ for any point $\x$ on the finite discretization $\Tilde{\cC}$ of $\cC$ with $|\Tilde{\cC}|=n$. 

\begin{theorem}\label{thm:gp-sample-complexity-any-d} Consider the above setup for any arbitrary $\cD$. Let $n=|\Tilde{\cC}|$. For any $\epsilon,\delta>0$, $N$ problems $\{P_i,\mathbf{z}_i\}_{i=1}^N$ sampled from $\cD$ with $N=O\left(\left(\frac{H}{\epsilon}\right)^2\left(T\log nT +\log \frac{1}{\delta}\right)\right)$ are sufficient to ensure that with probability at least $1-\delta$, for all $s\in[s_{\min},s_{\max}]$, we have that $$\left\vert \frac{1}{N}\sum_{i=1}^N l_T^{P_i, \mathbf{z}_i}(s)-\bbE_{P\sim\cD}l_T^{P}(s)\right\vert<\epsilon.$$ 
\end{theorem}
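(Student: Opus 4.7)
The plan is to apply Theorem \ref{thm:sample-complexity-inter-task} to the GP-UCB family after bounding the derandomized dual complexity as $\log Q_\cD = O(T\log nT)$. First I would verify that, for every fixed realization $\z$ of the noise (so that each $y_t = f(\x_t)+\epsilon_t$ is a deterministic function of the chosen query $\x_t$) and every sampled function $f$, the loss $l_T^{P,\z}(s)$ depends on $s\in[s_{\min},s_{\max}]$ only through the sequence of queries $(\x_1,\dots,\x_T)\in\Tilde{\cC}^T$. Hence $l_T^{P,\z}$ is piecewise constant in $s$, and it suffices to bound the number of intervals of $s$ over which the full query sequence is constant.

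The core combinatorial step is an induction on $t$. Suppose the parameter interval has been partitioned into sub-intervals on each of which the history $(\x_1,\dots,\x_{t-1})$ is constant. Fix one such sub-interval $I$; the history and $\z$ determine $\y_{t-1}$ and $\K_{t-1}$ on $I$. Since $(\K_{t-1}+sI)^{-1}=\mathrm{adj}(\K_{t-1}+sI)/\det(\K_{t-1}+sI)$, with $\det(\K_{t-1}+sI)$ a polynomial of degree $t-1$ and $\mathrm{adj}(\K_{t-1}+sI)$ a matrix of polynomials of degree at most $t-2$, the quantities $\mu_{t-1}(\x)$ and $\sigma_{t-1}(\x)$, and therefore $a_t(\x)=\mu_{t-1}(\x)+\sqrt{\beta_t}\sigma_{t-1}(\x)$, are rational functions of $s$ of degree $O(t)$. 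For any pair $\x,\x'\in\Tilde{\cC}$, the equation $a_t(\x)=a_t(\x')$ reduces after clearing common denominators to a polynomial identity in $s$ of degree $O(t)$, with at most $O(t)$ real roots. Ranging over all $\binom{n}{2}$ pairs, $I$ is subdivided by at most $O(n^2 t)$ critical points, producing $O(n^2 t)$ sub-intervals on each of which $\x_t = \argmax_{\x\in\Tilde{\cC}} a_t(\x)$ is constant.

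Multiplying these per-step factors gives $q(l_T^{P,\z}(\cdot))\le \prod_{t=1}^T O(n^2 t)\le (cn^2 T)^T$ for some constant $c$, so $\log q(l_T^{P,\z}(\cdot)) = O(T\log nT)$ pointwise in $(P,\z)$. Taking expectation and using Jensen's inequality on $\log$ yields $\log Q_\cD \le \log \max q = O(T\log nT)$. Plugging into Theorem \ref{thm:sample-complexity-inter-task} with $H$-bounded loss yields the claimed sample complexity.

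The main obstacle will be the treatment of $\sqrt{\beta_t}\sigma_{t-1}(\x)$: if $\sigma_{t-1}$ is a posterior standard deviation rather than a variance as the displayed update suggests, $a_t(\x)$ is algebraic rather than rational in $s$. One still reduces the equality $a_t(\x)=a_t(\x')$ to a polynomial identity by isolating and squaring out the square-root terms once or twice, which preserves the $O(t)$ degree bound and hence the overall estimate. A minor secondary subtlety is to confirm that breaking ties in the $\argmax$ consistently (e.g., by arm index) does not introduce spurious discontinuities beyond those counted above, after which the piecewise-constant structure and the count $(cn^2 T)^T$ go through unchanged.
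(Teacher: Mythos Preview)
Your proposal is correct and follows essentially the same argument as the paper: both fix the randomness, show that on each sub-interval of $s$ with a fixed history the acquisition function $a_t(\x)$ is a rational function of $s$ of degree $O(t)$, count $O(n^2 t)$ crossings per step, and multiply over $t$ to get at most $(cn^2T)^T$ pieces. The only cosmetic difference is the final wrapping: you invoke Theorem~\ref{thm:sample-complexity-inter-task} via the bound on $Q_\cD$, whereas the paper routes the same piece count through a pseudo-dimension bound (Lemma~2.3 of \cite{balcan2020data}) and classical uniform-convergence results; your handling of the possible square root in $\sigma_{t-1}$ is actually more careful than the paper's, which simply treats $a_t$ as rational.
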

\begin{proof}
Fix a problem  $P\in\Pi$. Fix the random coins $\mathbf{z}$ used to draw the arm rewards according to $\cD_P$ for the $T$ rounds. 
Let $t\in[T]$ denote some round of the online game, and suppose we are running Algorithm \ref{alg:gp-ucb} with noise parameter $s$.  \looseness-1

For a fixed sequence $\x_1,\dots,\x_t$, observe that each entry of $(\cK_t+sI)^{-1}$ is a rational function with degree at most $t$ in $s$  (e.g.\ Lemma C.2, \cite{balcan2022provably}). Thus, $\mu_t(\x)$ and $\sigma_t(\x)$ have each coordinate a  rational  function (ratio of two polynomials)  of $s$ of degree at most $t$. $\beta_t$ is typically set independently of $s$ \cite{srinivas2010gaussian}. Therefore, the acquisition function $a_t(\x)$ is a rational  function  of $s$ with degree at most $t$.
For any two points in the discretization $\x_i,\x_j\in \Tilde{C}$, $a_t(\x_i)\ge a_t(\x_j)$ corresponds to at most $2t$ disjoint intervals of $s$. Thus over all choices of $\x_i,\x_j$, there are at most $2t\cdot{n \choose 2}\le n^2t$ intervals of $s$ over each of which $\x_{t+1}$ is fixed, provided $\x_1,\dots,\x_t$ are fixed. By induction over $t$, there are at most $n^{2T}\cdot T!$ intervals of $s$ over which $\x_1,\dots,\x_T$ and therefore the behavior of Algorithm \ref{alg:gp-ucb} is fixed. By Lemma 2.3 of \cite{balcan2020data}, the pseudo-dimension of the class of loss functions is at most $O(\log(n^{2T}T!))=O(T\log nT)$. The sample complexity bound  follows from classic results (Chapter 19, \cite{anthony1999neural}).
\end{proof}

\noindent We find that the sample complexity required to accurately learn $s$ increases linearly with $T$. But this does not pose a significant limitation in applications such as hyperparameter optimization in deep learning, where $T$ is often very small.

\section{Experiments}\label{sec:expts}

\begin{figure*}[t]
\centering{\includegraphics[width=0.95\textwidth]{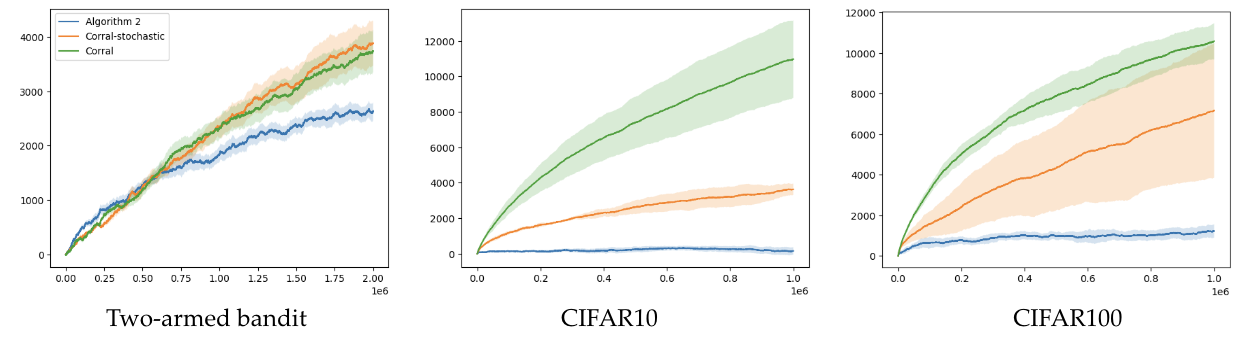}}\vspace{-0.02in}

    \caption{Comparison of Algorithm \ref{alg:tuned-ucb} to corralling based algorithms \textsc{Corral} \cite{agarwal2017corralling} and \textsc{Corral-stochastic} \cite{arora2021corralling}.}
    \label{fig:corral}
\end{figure*}

In this section, we provide empirical evidence for the significance of our hyperparameter transfer framework on real and synthetic data. As baselines, we consider corralling-based algorithms which are quite popular for learning bandit hyperparameters. As described previously, these approaches work by constructing a (finite) band of bandit algorithms corresponding to a grid of hyperparameter values, and running a meta-algorithm for selecting the hyperparameter in each round. The original \textsc{Corral} algorithm  \cite{agarwal2017corralling} uses an OMD (Online Mirror Descent) meta-algorithm with Log-Barrier regularizer, and the \textsc{Corral-stochastic} algorithm \cite{arora2021corralling} uses a Tsallis-INF regularizer and achieves stronger instance-dependent regret guarantees for stochastic bandits.
By exploiting offline data, we out-perform both these corralling-based approaches on real datasets involving tuning the learning rate of neural networks on benchmark datasets.\looseness-1

\textbf{Synthetic two-armed bandits.} We consider a simple two-armed bandits problem with Bernoulli arm rewards (see Appendix for more experiments on uniform and Gaussian rewards). Arm 1 draws a reward of 0 or 1 with probability 0.5 each in all tasks. Arm 2 draws a reward of value 1 with probability $0.5+\epsilon$ with $\epsilon\sim\cN(0.01,\sigma_b^2=0.01)$ and 0 otherwise. Given the small arm gap, this is a challenging problem that needs a lot of exploration.

\textbf{Hyperparameter tuning for Deep Learning.} We also consider the task of tuning the learning rate for training neural networks on  image classification tasks. The arms consist of 11 different learning rates $(0.001, 0.002, 0.004, 0.006, 0.008, 0.01, 0.05, 0.1, 0.2, 0.4, 0.8)$ and the arm reward is given by the classification accuracy of feedforward neural networks trained via SGD (stochastic gradient descent) with that learning rate and a batch size of 64 for 20 epochs. We present our results for CIFAR-10 and CIFAR-100~\cite{Krizhevsky09learning} benchmark image classification datasets. The task distribution
is defined by a uniform distribution over the label noise
proportions $(0.0, 0.1, 0.2, 0.3)$, and the network depth
$(2, 4, 6, 8, 10)$. All our experiments on CIFAR are run on $1$ Nvidia A100 GPU. 

\textbf{Setup and Discussion.} For each dataset we run Algorithm \ref{alg:tuned-ucb} over $N=200$ training/offline tasks with time horizon $T_o=20$, and run corralling for a grid of ten hyperparameter values $\alpha=\{0.1, 0.2, 0.5, 1, 2, 5, 10, 20, 50, 100\}$. Figure \ref{fig:corral} compares the effectiveness of running UCB with the learned hyperparameter $\hat{\alpha}$ vs.\ corralling over a grid of hyperparameters over 1000000 time steps (mean regret and standard deviation over 5 iterations). Our algorithm which exploits offline data significantly outperforms corralling based algorithms on real datasets. The key advantage of our approach is that by learning a good hyperparameter offline, we can save significantly on exploration. We verify this hypothesis by considering a synthetic two-armed bandits problem with Bernoulli parameters $0.5$ and roughly $0.51$, where a large amount of exploration is unavoidable. Even on this challenging synthetic dataset where the arm rewards are extremely close and more exploration is needed, our algorithm beats corralling over a longer time horizon of 2000000 steps. Further details and addtitional experiments showing dependence of regret on $\alpha$ and the number of training tasks $N$ are in the Appendix, where we also   empirically estimate $\cQ_{\cD}$ and show that typical values on natural problems are quite small, implying that our proposed algorithms are sample and computationally efficient in practice. Recall that our inter-task, intra-task and computational complexities scale as $O(\log \cQ_{\cD})$, $O(\cQ_{\cD})$, $O(\cQ_{\cD})$ respectively.

\section{Conclusion, Limitations and Future Work}
We study the problem of tuning hyperparameters of stochastic bandit algorithms, given access to offline data. Our setting is motivated by large information theoretic gaps for identifying the best hyperparameter in a fully online fashion, in the bandit setting. We provide a formal framework where the tasks are drawn iid from some distribution and the learner has access to some offline (training) tasks. For tuning the exploration parameter in UCB and the noise parameter in GP-UCB, we obtain bounds on the time horizon and number of the offline tasks needed to obtain any desired generalization performance on the unseen test tasks from the same distribution.\looseness-1

We believe the intra-task sample complexity bounds provided in our work can be improved with more careful arguments. An important question  that our work doesn't yet address is: \emph{how to strategically collect offline data to minimize the intra-task sample complexity?} 
Another direction is to tune hyperparameters beyond UCB-style algorithms.


\bibliographystyle{alpha}
\bibliography{main}

\clearpage
\appendix

\section{Learning theory background}

For ready reference, we include in this section  some well-known concepts from statistical learning theory. In particular, we include the definitions of the Rademacher complexity and pseudo-dimension, and the corresponding well-known generalization guarantees. 

Our proof of Theorem 6.1 involves bounding the empirical Rademacher complexity which is generally used to distribution-dependent guarantees \cite{bartlett2002rademacher,koltchinskii2001rademacher}. 

\begin{definition}[Rademacher complexity] Let $\cF$ be a class of functions mapping $\cX\mapsto[0,H]$. The empirical Rademacher complexity of $\cF$ with respect to a sample $S = \{s_1,\dots,s_m\} \subseteq \cX$ of size $m$ is defined as
    $$\hat{R}(\cF,S):=\frac{1}{m}\bbE_{\sigma\sim\{-1,1\}^m}\left[\sup_{f\in\cF}\sum_{i=1}^m\sigma_if(s_i)\right],$$
    and the Rademacher complexity is defined as $${R}(\cF,\cD):=\bbE_{S\sim\cD^m}\hat{R}(\cF,S).$$
\end{definition}

The following generalization guarantee based on the Rademacher complexity is well-known.

\begin{theorem}\label{thm:rc-generalization}
    For any distribution $\cD$ over $\cX$, with probability at least $1 - \delta$ over the draw of $S = \{s_1,\dots,s_m\}\sim\cD^m$, for all $f\in\cF$, 
    $$\Bigg\lvert\bbE_{x\sim\cD}f(x)-\frac{1}{m}\sum_{i=1}^mf(x_i)\Bigg\rvert\le 2{R}(\cF,\cD)+H\sqrt{\frac{1}{m}\log\frac{1}{\delta}}.$$
\end{theorem}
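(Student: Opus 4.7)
The plan is to prove this classical uniform convergence bound by combining McDiarmid's inequality with a standard symmetrization argument to replace the expected supremum by the Rademacher complexity. First I would define the one-sided supremum deviation $\Phi(S) := \sup_{f \in \cF} \bigl(\bbE_{x\sim\cD}[f(x)] - \tfrac{1}{m}\sum_{i=1}^m f(s_i)\bigr)$. Since every $f \in \cF$ takes values in $[0,H]$, replacing any single coordinate $s_i$ of $S$ by a new point $s_i'$ changes each empirical mean $\tfrac{1}{m}\sum_j f(s_j)$ by at most $H/m$, and taking a supremum preserves this, so $\Phi$ satisfies the bounded-difference condition with constants $c_i = H/m$.

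Second, I would apply McDiarmid's inequality to $\Phi$, obtaining that with probability at least $1 - \delta$ over $S \sim \cD^m$,
\[
\Phi(S) \;\le\; \bbE_{S}[\Phi(S)] + H\sqrt{\tfrac{1}{2m}\log\tfrac{1}{\delta}}.
\]
Third, I would bound $\bbE_S[\Phi(S)]$ by $2 R(\cF,\cD)$ via symmetrization. Introducing an i.i.d.\ ghost sample $S' = \{s_1',\dots,s_m'\} \sim \cD^m$, Jensen's inequality gives $\bbE_S[\Phi(S)] \le \bbE_{S,S'}\bigl[\sup_{f \in \cF} \tfrac{1}{m}\sum_i (f(s_i') - f(s_i))\bigr]$. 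Then, because $f(s_i')-f(s_i)$ is symmetric in distribution (exchanging $s_i \leftrightarrow s_i'$), one may multiply each term by an independent Rademacher sign $\sigma_i$ without changing the joint distribution, yielding $\bbE_S[\Phi(S)] \le 2 \bbE_{S,\sigma}\bigl[\sup_{f \in \cF} \tfrac{1}{m}\sum_i \sigma_i f(s_i)\bigr] = 2 R(\cF,\cD)$.

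Finally, to handle the absolute value in the stated bound, I would repeat the argument for the class $-\cF := \{-f : f \in \cF\}$, which has the same Rademacher complexity as $\cF$, to control the opposite-sided deviation $\sup_{f \in \cF}\bigl(\tfrac{1}{m}\sum_i f(s_i) - \bbE[f]\bigr)$. A union bound over the two sides (each at level $\delta/2$) then gives a uniform two-sided bound of the form $\sup_{f \in \cF} |\bbE[f] - \hat{\bbE}[f]| \le 2 R(\cF,\cD) + H\sqrt{\tfrac{1}{m}\log\tfrac{1}{\delta}}$ after absorbing logarithmic constants into the square root as written. Since this is a textbook result (mirroring the proof in Bartlett--Mendelson), no step presents a real obstacle; the only care is in verifying that the bounded-difference constant is $H/m$ (rather than $1/m$) owing to the range $[0,H]$, which is why the noise term scales with $H$.
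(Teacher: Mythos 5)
Your proof is correct and is the standard McDiarmid-plus-symmetrization argument; the paper itself does not prove this theorem but states it as a well-known result with citations to Bartlett--Mendelson and Koltchinskii, whose proofs follow exactly the route you describe. The only (immaterial) loose end is that absorbing the union-bound factor, i.e.\ passing from $H\sqrt{\frac{1}{2m}\log\frac{2}{\delta}}$ to $H\sqrt{\frac{1}{m}\log\frac{1}{\delta}}$, requires $\delta\le 1/2$, which is the regime in which such bounds are stated anyway.
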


In contrast, the pseudo-dimension is frequently used to analyze the learning theoretic complexity of real-valued  function classes and give worst case bounds over the distribution. The formal definition is stated here for convenience.

\begin{definition}[Shattering and pseudo-dimension \cite{anthony1999neural}]
Let $\cF$ be a set of functions mapping from $\cX$ to $\bbR$, and suppose that $S = \{x_1, \dots, x_m\} \subseteq \cX$. Then $S$ is pseudo-shattered by $\cF$ if there are real numbers $r_1, \dots, r_m$ such that for each $b \in \{0, 1\}^m$ there is a function $f_b$ in $\cF$ with $\mathrm{sign}(f_b(x_i) - r_i) = b_i$ for $i \in [m]$. We say that $r = (r_1, \dots, r_m)$ witnesses the shattering. We say that $\cF$ has pseudo-dimension $d$ if $d$ is the maximum cardinality of a subset $S$ of $\cX$ that is pseudo-shattered by $\cF$, denoted $\mathrm{Pdim}(\cF) = d$. If no such maximum exists, we say that $\cF$ has infinite pseudo-dimension. 
\end{definition}

Pseudo-dimension generalizes the notion of VC-dimension from binary valued functions to real-valued functions, and is another classic learning theoretic complexity notion. The following uniform convergence sample complexity for any function in class $\cF$ when $\mathrm{Pdim}(\cF)$ is finite is known. 
\begin{theorem}[Uniform convergence sample complexity via pseudo-dimension \cite{anthony1999neural}]\label{thm:pdim}
    Suppose $\cH$ is a class of real-valued functions with range in $[0, H]$ and finite $\mathrm{Pdim}(\cF)$. For every $\epsilon > 0$ and $\delta \in (0, 1)$, the sample complexity of $(\epsilon, \delta)$-uniformly learning the class $\cH$ is $O\left(\left(\frac{H}{\epsilon}\right)^2\left(\mathrm{Pdim}(\cF)\log\left(\frac{H}{\epsilon}\right) + \log\left(\frac{1}{\delta}\right) \right)\right)$.
\end{theorem}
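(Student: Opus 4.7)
The plan is to bound the pseudo-dimension of the class of loss functions $\cF = \{l_T^{P,\z}(\cdot): (P,\z)\}$ viewed as functions of the scalar hyperparameter $s\in[s_{\min},s_{\max}]$, and then invoke Theorem \ref{thm:pdim} to obtain the stated sample complexity. The route is structural: I would show that for every fixed $(P,\z)$, the map $s \mapsto l_T^{P,\z}(s)$ is piecewise constant with a controllable number of pieces, because the entire trajectory of Algorithm \ref{alg:gp-ucb} is piecewise constant in $s$.

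First I would fix a problem $P\in\Pi$ and a realization $\z$ of the noise so that, for any chosen sequence of query points, the observations $y_t = f(\x_t)+\epsilon_t$ are determined. The key algebraic observation is that, conditioned on a fixed history $\x_1,\dots,\x_t$, the Gram matrix $\cK_t$ does not depend on $s$, so Cramer's rule gives every entry of $(\cK_t+sI)^{-1}$ as a rational function in $s$ of degree at most $t$ (this is essentially Lemma C.2 of \cite{balcan2022provably}). Consequently both $\mu_t(\x)$ and $\sigma_t(\x)$ are coordinate-wise rational in $s$ of degree at most $t$, and since $\beta_t$ is chosen independently of $s$, the acquisition value $a_t(\x) = \mu_t(\x)+\sqrt{\beta_t}\sigma_t(\x)$ is rational in $s$ of degree at most $t$ for each $\x \in \Tilde{\cC}$.

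Next I would count how the $\argmax$ in the update step of Algorithm \ref{alg:gp-ucb} varies with $s$. For any pair $\x_i,\x_j \in \Tilde{\cC}$, the comparison $a_t(\x_i) \ge a_t(\x_j)$ reduces, after clearing denominators, to the sign of a polynomial in $s$ of degree at most $2t$, which partitions $[s_{\min},s_{\max}]$ into at most $2t$ intervals. Taking the common refinement over all $\binom{n}{2}$ pairs gives at most $n^2 t$ intervals of $s$ on which $\x_{t+1}$ is constant, given the history. Iterating this reasoning over $t=1,\dots,T$ and multiplying the per-round counts yields at most $n^{2T}\cdot T!$ intervals of $s$ on which the full trajectory $\x_1,\dots,\x_T$ — and hence $l_T^{P,\z}(s)$ — is constant.

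Finally I would translate this piecewise-constant structure into a pseudo-dimension bound via Lemma 2.3 of \cite{balcan2020data}, obtaining $\mathrm{Pdim}(\cF) = O(\log(n^{2T}T!)) = O(T\log nT)$. Plugging this into Theorem \ref{thm:pdim} gives the required sample complexity $N = O((H/\epsilon)^2(T\log nT + \log(1/\delta)))$. The main obstacle is the inductive step of bounding the pieces across rounds: one has to argue that the degree bound on $a_t(\cdot)$ is maintained round by round even though the ``effective'' rational function depends on which branch of the previous-round partition one is in. This is handled cleanly by working within a single piece of the cumulative refinement, where $\x_1,\dots,\x_{t-1}$ are fixed, so the degree-$t$ bound on $(\cK_t+sI)^{-1}$ applies unchanged; the polynomial factors across rounds then combine to the stated $n^{2T}T!$ bound.
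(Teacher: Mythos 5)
Your proposal does not prove the statement it was asked to prove. The statement is the classical uniform convergence guarantee via pseudo-dimension from \cite{anthony1999neural}: for \emph{any} real-valued function class $\cH$ with range $[0,H]$ and finite pseudo-dimension, $O\bigl(\bigl(\frac{H}{\epsilon}\bigr)^2\bigl(\mathrm{Pdim}(\cF)\log\bigl(\frac{H}{\epsilon}\bigr)+\log\bigl(\frac{1}{\delta}\bigr)\bigr)\bigr)$ samples suffice for $(\epsilon,\delta)$-uniform learning. This is a background result that the paper imports without proof; a proof of it would go through covering numbers (e.g.\ bounding the $L_1$ covering number of $\cH$ in terms of $\mathrm{Pdim}$ via the Haussler/Pollard packing argument) followed by a symmetrization and Chernoff-type concentration step, none of which appears in your write-up.

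What you have instead written is essentially the paper's proof of its Theorem~\ref{thm:gp-sample-complexity-any-d} (the GP-UCB noise-parameter sample complexity): you fix $(P,\z)$, argue that entries of $(\cK_t+sI)^{-1}$ are rational of degree at most $t$ in $s$, count sign changes of the pairwise acquisition comparisons to get $n^{2T}\cdot T!$ pieces, convert that to $\mathrm{Pdim}=O(T\log nT)$ via Lemma 2.3 of \cite{balcan2020data}, and then \emph{invoke} the very theorem you were supposed to prove as the final step. Using the target statement as a black box inside its own proof is circular, so as a proof of Theorem~\ref{thm:pdim} the argument is vacuous. The structural content of your argument is correct and matches the paper's treatment of GP-UCB, but it establishes a bound on $\mathrm{Pdim}$ for one specific class; it says nothing about why finite pseudo-dimension implies the stated sample complexity for uniform convergence, which is the entire content of the statement in question.
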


Uniform learning is closely related to  PAC (probably approximately correct) learning. It is easy to see that $(\epsilon, \delta)$-uniform learning corresponds to $(\epsilon / 2, \delta)$-PAC learning.

\section{Proof of Theorem \ref{thm:impossibility_hp_tuning}}
\thmlb*

\noindent {\it Proof.} We first prove the following intermediate result which provides distribution dependent regret lower bounds for MABs.  Consider a $K$-armed MAB problem where the reward of each arm follows a Gaussian distribution with unknown mean and variance. The only information known about the variance is that it is bounded between $[0, B^2]$ for some very large $B$. We denote this model class by $\mathcal{M}$.
\begin{theorem}
    \label{thm:lb_mab_hp_tuning_gaussian}
    Let $\nu$ be any problem instance that belongs to the model class $\mathcal{M}$ described above. Let $\nu_i$ be the distribution of arm $i$ in problem instance $\nu$. Let $\mathbf{E}(\nu_i)$ be the mean of $\nu_i$ and $\mathbf{V}(\nu_i)$ be its variance. Without loss of generality, let $1$ be the index of the best arm. Let $\Delta_i = \mathbf{E}(\nu_1) - \mathbf{E}(\nu_i)$. Moreover, suppose $B > \max_{i>1} \Delta_i^2 + \mathbf{V}(\nu_i)$. Then the regret of any consistent algorithm satisfies the following instance dependent lower bound 
    \begin{align*}
        \lim_{T\to\infty}\frac{\text{Reg}_T(\nu)}{\log{T}} \geq \sum_{i: \Delta_i > 0} \frac{2\Delta_i}{\log\left(1+\frac{\Delta_i^2}{\mathbf{V}(\nu_i)}\right)}.
    \end{align*}
\end{theorem}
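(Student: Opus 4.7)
The plan is to reduce this to the classical change-of-measure / distribution-dependent lower bound of Lai--Robbins, generalized by Burnetas--Katehakis and refined by Cappé et al. \cite{cappe2013kullback}. That result states that for any consistent algorithm on a model class $\mathcal{M}$,
\begin{equation*}
\lim_{T\to\infty} \frac{\text{Reg}_T(\nu)}{\log T} \;\geq\; \sum_{i:\Delta_i>0} \frac{\Delta_i}{D_{\inf}(\nu_i, \mathbf{E}(\nu_1);\mathcal{M}_i)},
\end{equation*}
where $D_{\inf}(\nu_i,\mu;\mathcal{M}_i) = \inf\{ \mathrm{KL}(\nu_i\,\|\,\nu'): \nu'\in\mathcal{M}_i,\; \mathbf{E}(\nu')>\mu\}$ and $\mathcal{M}_i$ is the marginal model class for arm $i$. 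So the whole problem reduces to computing $D_{\inf}$ when $\mathcal{M}_i$ is the class of one-dimensional Gaussians with variance constrained to $[0,B^2]$.

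The key calculation is therefore the following optimization. Write $\nu_i=\mathcal{N}(\mu_i,\sigma_i^2)$ and, competing against the best arm mean $\mu^*=\mathbf{E}(\nu_1)$, consider $\nu'=\mathcal{N}(\mu',\tau^2)$ with $\mu'>\mu^*$ and $\tau^2\in[0,B^2]$. Using the closed-form Gaussian KL,
\begin{equation*}
\mathrm{KL}(\nu_i\,\|\,\nu') \;=\; \tfrac12 \log\!\frac{\tau^2}{\sigma_i^2} \;+\; \frac{\sigma_i^2+(\mu_i-\mu')^2}{2\tau^2} \;-\; \tfrac12,
\end{equation*}
I would first drive $\mu'\downarrow \mu^*$ so that $(\mu_i-\mu')^2 \to \Delta_i^2$ (the infimum is not attained but the value is the right limit by continuity and consistency). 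Then minimize over $\tau^2$: setting the derivative in $x=\tau^2$ to zero gives $x^\star=\sigma_i^2+\Delta_i^2$, and the hypothesis $B^2>\Delta_i^2+\mathbf{V}(\nu_i)$ guarantees $x^\star\in(0,B^2]$, so the unconstrained minimizer is feasible. Plugging back in, the terms combine cleanly to
\begin{equation*}
D_{\inf}(\nu_i,\mu^*;\mathcal{M}_i) \;=\; \tfrac12 \log\!\Bigl(1+\tfrac{\Delta_i^2}{\sigma_i^2}\Bigr).
\end{equation*}

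Substituting this into the Burnetas--Katehakis bound yields exactly the claimed inequality, since $\Delta_i / \bigl(\tfrac12\log(1+\Delta_i^2/\mathbf{V}(\nu_i))\bigr) = 2\Delta_i/\log(1+\Delta_i^2/\mathbf{V}(\nu_i))$. The main subtlety, and the step I would be most careful about, is justifying the use of the change-of-measure lower bound when the variance is unknown: I would verify that the Gaussian family with variance in $[0,B^2]$ satisfies the regularity conditions of Theorem 1 of \cite{cappe2013kullback} (in particular, that $D_{\inf}(\nu_i,\cdot;\mathcal{M}_i)$ is continuous at $\mu^*$ from above, which follows here because the optimizer $\tau^2=\sigma_i^2+(\mu_i-\mu')^2$ depends continuously on $\mu'$ and lies strictly inside $[0,B^2]$ under the assumption on $B$). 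Once that is in place, the chain above delivers the theorem without further work.
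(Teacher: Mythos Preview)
Your proposal is correct and follows essentially the same route as the paper's own proof: invoke the Lai--Robbins/Burnetas--Katehakis lower bound as stated in \cite{cappe2013kullback}, plug in the closed-form Gaussian KL, minimize over the alternative variance $\tau^2$ to find the unconstrained optimum $\tau^2=\sigma_i^2+\Delta_i^2$, use the assumption on $B$ to ensure feasibility, and read off $D_{\inf}=\tfrac12\log(1+\Delta_i^2/\mathbf{V}(\nu_i))$. Your additional care about the $\mu'\downarrow\mu^*$ limit and the continuity regularity condition is a bonus the paper does not spell out.
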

\begin{proof}
    From standard distribution dependent lower bounds~\cite{cappe2013kullback}, we know that 
    \begin{align*}
        \lim_{T\to\infty}\frac{\text{Reg}_T(\nu)}{\log{T}} \geq \sum_{i: \Delta_i > 0} \frac{\Delta_i}{\text{KL}_{\text{inf}}(\nu_i, \mathbf{E}(\nu_1))},
    \end{align*}
    where $\text{KL}_{\text{inf}}(\nu_i, \mathbf{E}(\nu_1)) = \inf_{\nu'\in\mathcal{M}} \{\text{KL}(\nu_i, \nu'): \mathbf{E}(\nu') > \mathbf{E}(\nu_1)\}.$ 
    Note that KL divergence between two univariate Gaussians is given by
    \[
    \text{KL}(\cN(\mu_1, \sigma_1^2)||\cN(\mu_2, \sigma_2^2)) = \log\frac{\sigma_2}{\sigma_1} + \frac{\sigma_1^2 + (\mu_1-\mu_2)^2}{2\sigma_2^2} - \frac{1}{2}.
    \]
    Using this, we have the following expression for $\text{KL}_{\text{inf}}$ of our model class
    \begin{align*}
        \text{KL}_{\text{inf}}(\nu_i, \mathbf{E}(\nu_1)) =\inf_{\sigma \in [0, B]}\frac{1}{2}\log\frac{\sigma^2}{\mathbf{V}(\nu_i)} + \frac{\mathbf{V}(\nu_i) + \Delta_i^2}{2\sigma^2} - \frac{1}{2}.
    \end{align*}
    The above objective is convex in $\sigma^{-1}$. It's minimizer is achieved at $\min\left\lbrace\sqrt{\Delta_i^2 + \mathbf{V}(\nu_i)}, B\right\rbrace.$ If $B$ is large, then the minimizer is achieved at $\sqrt{\Delta_i^2 + \mathbf{V}(\nu_i)}.$ Substituting this in the expression for $\text{KL}_{\text{inf}}$ gives us
    \begin{align*}
        \text{KL}_{\text{inf}}(\nu_i, \mathbf{E}(\nu_1)) = \frac{1}{2}\log\left(1+\frac{\Delta_i^2}{\mathbf{V}(\nu_i)}\right).
    \end{align*}
    Substituting this in the regret lower bound gives us the required result.
\end{proof}
\begin{remark}
\label{rem:ucb_known_sigma}
    Note that if the variances of all the arms are equal to $\sigma^2$ and known apriori, then the instance dependent regret lower bound is given by $\sum_{i: \Delta_i > 0} \frac{2\sigma^2}{\Delta_i}\log{T}$. Furthermore, we have algorithms such as UCB that can achieve this bound~\cite{cappe2013kullback}. 
\end{remark}
\noindent We now proceed to the proof of Theorem~4.1. Recall, our goal is to show that $\lim_{T\to\infty} R_T(\widetilde{A}; P)/R_T(A_{\rho^*}; P) > 1$. First, consider the numerator $R_T(\widetilde{A}; P)$. From Theorem~\ref{thm:lb_mab_hp_tuning_gaussian}, it is clear that there exist problem instances $P$ - with equal arm reward variances ($\mathbf{V}$) - in the model class $\mathcal{M}$ we have
\[
\lim_{T\to\infty}\frac{R_T(\widetilde{A}; P)}{\log{T}} \geq \sum_{i: \Delta_i > 0} \frac{2\Delta_i}{\log\left(1+\frac{\Delta_i^2}{\mathbf{V}}\right)},
\]
Now, consider the denominator $R_T(A_{\rho^*}; P)$. From Remark~\ref{rem:ucb_known_sigma} we know that there exists an exploration hyperparameter for UCB that gives us the following regret upper bound
\[
\lim_{T\to\infty}\frac{R_T(A_{\rho^*}; P)}{\log{T}} \leq \sum_{i: \Delta_i > 0} \frac{2\mathbf{V}}{\Delta_i}.
\]
Combining the above two inequalities and using the fact that $\log(1+x) \leq x$ for positive $x$, we get the required result.
\qed



\section{Proof of Theorem \ref{thm:sample-complexity-inter-task}}
\thmintertask*

\noindent{\it Proof.} 
Fix a problem  $P\in\Pi$. Fix the random coins $\mathbf{z}$ used to draw the arm rewards according to $D_P$ for the $T$ rounds. 
We will use the piecewise loss structure to bound the  Rademacher complexity, which would imply uniform convergence guarantees by applying standard learning-theoretic results. Let $\rho, \dots, \rho_m$ denote a collection of parameter values, with one parameter from each of the $m\le \sum_{i=1}^Nq(l_T^{P_i, \mathbf{z}_i}(\cdot))$ pieces of the dual class functions $l^{P_i,\mathbf{z}_i}_T(\cdot)$ for $i\in[N]$, i.e.\ across problems in the sample $\{P_1,\dots, P_N\}$ for some fixed randomizations. Let $\cF=\{f_{\rho}:(P,\mathbf{z})\mapsto l^{P,\mathbf{z}}_T(\rho)\mid \rho\in\cP\}$ be a family of functions on a given sample of instances $S=\{P_i,\mathbf{z}_i\}_{i=1}^N$. Since the function $f_\rho$ is constant on each of the $m$ pieces, we have the empirical Rademacher complexity,

\begin{align*}
    \hat{R}(\cF,S):&=\frac{1}{N}\bbE_\sigma\left[\sup_{f_\rho\in\cF}\sum_{i=1}^N\sigma_i f_\rho(P_i,\mathbf{z}_i)\right]\\
    &=\frac{1}{N}\bbE_\sigma\left[\sup_{j\in[m]}\sum_{i=1}^N\sigma_i f_{\rho_j}(P_i,\mathbf{z}_i)\right]\\
    &=\frac{1}{N}\bbE_\sigma\left[\sup_{j\in[m]}\sum_{i=1}^N\sigma_i v_{ij}\right],
\end{align*}
where $\sigma=(\sigma_1,\dots,\sigma_m)$ is a tuple of i.i.d.\ Rademacher random variables, and $v_{ij}:=f_{\rho_j}(P_i,\mathbf{z}_i)$. Note that $v^{(j)}:=(v_{1j},\dots, v_{Nj})\in[0,H]^N$, and therefore $||v^{(j)}||_2\le H\sqrt{N}$, for all $j\in[m]$. An application of Massart's lemma \cite{massart2000some} gives 

\begin{align*}
    \hat{R}(\cF,S)&=\frac{1}{N}\bbE_\sigma\left[\sup_{j\in[m]}\sum_{i=1}^N\sigma_i v_{ij}\right] \\&\le H\sqrt{\frac{2\log m}{N}}\\&\le H\sqrt{\frac{2\log\sum_{i=1}^Nq(l_T^{P_i, \mathbf{z}_i}(\cdot))}{N}}.
\end{align*}

\noindent Taking average over $S$, gives the Rademacher complexity

\begin{align*}
    {R}(\cF,\cD)=\bbE_{S}\hat{R}(\cF,S) &\le \bbE_{S} H\sqrt{\frac{2\log\sum_{i=1}^Nq(l_T^{P_i, \mathbf{z}_i}(\cdot))}{N}}\\
    &\le H\sqrt{\bbE_{S}\frac{2\log\sum_{i=1}^Nq(l_T^{P_i, \mathbf{z}_i}(\cdot))}{N}}\\
    &\le H\sqrt{\frac{2\log\bbE_{S}\sum_{i=1}^Nq(l_T^{P_i, \mathbf{z}_i}(\cdot))}{N}},
\end{align*}

\noindent where we have twice used the Jensen's inequality. Finally, using Definition 1, we obtain the following bound,

\begin{align*}
    {R}(\cF,\cD)\le H\sqrt{\frac{2\log N + 2\log Q_\cD}{N}}.
\end{align*}

\noindent Standard Rademacher complexity bounds [Barlett et al.\ 2002] now imply the desired sample complexity bound.
\qed

\section{Proof of Theorem \ref{thm:abundant_offline_data}}
\thmintratask*
\begin{proof}
    The hyperparameter tuning algorithm is the Empirical Risk Minimizer (ERM) $$\min_{\rho}\frac{1}{N}\sum_{i=1}^N l_T^{P_i, \mathbf{z}_i}(\rho)$$
    for which the error is at most
    \begin{align*}
        \bbE_{P\sim\cD}[l_T^{P}(\hat\rho)] - \min_{\rho} \bbE_{P\sim\cD}[l_T^{P}(\rho)] &\leq 2H\sqrt{\frac{2\log N + 2\log Q_\cD}{N}}+H\sqrt{\frac{1}{N}\log\frac{1}{\delta}}\\
        &\leq 4H\sqrt{\frac{\log \frac{N}{\delta} + \log Q_\cD}{N}},
    \end{align*}

    \noindent following the bound on the Rademacher complexity in the proof of Theorem 6.1 and using Theorem \ref{thm:rc-generalization}. We need sufficiently large offline runs to compute $l_T^{P_i, \mathbf{z}_i}(\rho)$ for each task in order to implement the ERM. 
    
    If $n\le Q_\cD$, we simply pull each arm $T$ times (or use a stochastic policy) which ensures that loss for all possible arm pull sequences and therefore for any value of $\rho$ can be computed for the given task. Clearly $\bbE[T_o]\le nT$ in this case.

    If $n>Q_\cD$, we exploit the piecewise constant structure of $l_T^{P_i, \mathbf{z}_i}(\rho)$ to sequentially compute the pieces. For any single piece, we need at most $T$ arm pulls to compute the loss over the piece and there are $Q_\cD$ pieces in expectation, per Definition 1. Therefore $\bbE[T_o]\le Q_{\cD}T$ in this case.
\end{proof}

\section{Proof of Theorem \ref{thm:ucb}}
\thmucb*

\begin{proof}
    Fix a problem  $P\in\Pi$. Fix the random coins $\mathbf{z}$ used to draw the arm rewards according to $\cD_P=\cD_1\times\dots\times\cD_n$ for the $T$ rounds, i.e.\ suppose the environment draws the $T$ rewards $\mathbf{r}_a=(r_a[1],\dots,r_a[T])$ according to $\cD_a$ for each arm $a\in[n]$ beforehand, and reveals $\mathbf{r}_{a_t}[t_{a_t}]$ if the learner selects arm $a_t$ at time $t$, where $t_{a_t}$ is the number of pulls of arm $a_t$ so far. 
Let $t\in[T]$ denote some round of the online game, and suppose we are running Algorithm 1 with (exploration) parameter $\alpha$.

The algorithm is identical for all $\alpha$ for $t\le n$. In round $t>n$, the behavior of Algorithm 1 can only change at critical points of the form $\sqrt{\alpha}=\frac{1}{\sqrt{\log t}}\cdot\frac{\hat{\mu}_{i}-\hat{\mu}_{j}}{\frac{1}{\sqrt{t_{j}}}-\frac{1}{\sqrt{t_{i}}}}$ for some $1\le i\ne j\le n$ with $t_i\ne t_j$. This quantity is fully determined by the vector $\mathbf{t}=(t_1,\dots,t_n)$ given fixed randomization $\z$. By a simple combinatorial argument, $\mathbf{t}$ takes at most ${n+(t-n)-1 \choose n-1}={t-1\choose n-1}=O((t-1)^{n-1})$. Thus, there are at most $(t-1)^{n-1}$ distinct critical points in the derandomized dual function $l^{P,\mathbf{z}}_T(\alpha)$ corresponding to time step $t$. Adding up over all time steps, we have at most $(T-n)(T-1)^{n-1}\le T^{n-1}$ critical points in $l^{P,\mathbf{z}}_T(\alpha)$, therefore $\log Q_\cD=O(n\log T)$ by definition.
\end{proof}

\section{Proof of Theorem \ref{thm:ucb-categorical}}

\thmucbcategorial*
\noindent{\it Proof.} 
    As in the proof of Theorem 7.1, we fix a problem $P\in\Pi$ and random coins $\mathbf{z}$. The algorithm is identical for all $\alpha$ for $t\le n$. In round $t>n$, the behavior of Algorithm 1 can only change at critical points of the form $\sqrt{\alpha}=\frac{1}{\sqrt{\log t}}\cdot\frac{\hat{\mu}_{i}-\hat{\mu}_{j}}{\frac{1}{\sqrt{t_{j}}}-\frac{1}{\sqrt{t_{i}}}}$ for some $1\le i\ne j\le n$ with $t_i\ne t_j$. In this expression, $t$ can take at most $T-n$ distinct values, and $1\le t_i,t_j \le T-n+1$. For any arm $i$, the total reward in $t_i$ arm pulls is at most $(K-1)t_i$ (i.e.\ at most $KT$ distinct values), and the mean reward $\hat{\mu}_i$ has at most $KT^2$ distinct possible values. Overall, the number of distinct possible values of critical points in the derandomized dual $l^{P,\mathbf{z}}_T(\alpha)$ is at most 
$(T-n)\cdot (T-n+1)^2 \cdot (KT^2)^2\le K^2T^7$. The result follows from Definition 1.
\qed

\section{Proof of Theorem \ref{thm:linucb}}
\thmlinucb*

\noindent{\it Proof.} 
    Fix a problem  $P\in\Pi$. Fix the random coins $\mathbf{z}$ used to draw the arm rewards according to $\cD_P$ for the $T$ rounds. In round $t$, the behavior of Algorithm 2 can only change at critical points of the form 
    ${\alpha}=\frac{\hat{\theta}_{t}^T(x_{t,i}-{x}_{t,j})}{\sqrt{x_{t,j}^TK^{-1}x_{t,j}}-\sqrt{x_{t,i}^TK^{-1}x_{t,i}}}$ for some $1\le i\ne j\le n$. This quantity is fully determined by the vector $\mathbf{t}=(t_1,\dots,t_n)$ given fixed randomization $\mathbf{z}$. 
    Observe  that these critical points can partition the parameter domain $\cP$ into at most $n$ intervals, since sets of points where a given arm dominates is a continuous set. In any  round $t>n$, any of the intervals from the previous round can only be subdivided into at most $n$ pieces (same argument as above applied to any interval over which the algorithm behavior is identical for all the rounds till the $(t-1)$-th round) where the algorithm behaves identically. At the end of $T$ rounds, we have at most $n^{T}$ such intervals. Therefore $l^{P,\mathbf{z}}_T(\alpha)$ is piecewise constant with at most $n^T$ pieces, or $Q_{\cD}=O(T\log n)$ by definition.
\qed

\section{Proof of Theorem \ref{thm:sample-complexity-with-prior}}
\thmucbprior*
\noindent{\it Proof.} 
    We will employ tools due to \cite{bartlett2022generalization} for bounding the pseudo-dimension of the loss function class. We give a GJ algorithm (Definition 3.1, \cite{bartlett2022generalization}) for computing the loss function $l_T^{P_i, \mathbf{z}_i}(\alpha,\mathbf{\hat{\mu}^0})$ on any given fixed instance and randomization and bound its degree and predicate complexity.
    
    To compute $l_T^{P_i, \mathbf{z}_i}(\alpha,\mathbf{\hat{\mu}^0})$ it is sufficient to determine which arm $A_t$ is pulled in all the rounds $t\in[T]$ for any given parameters $\alpha,\mathbf{\hat{\mu}^0}$. 
    The $\mathrm{argmax}$ in the expression for $A_t$ can be determined by evaluating predicates of the form $$(\alpha \log t)\left(\frac{1}{\sqrt{t_j}}-\frac{1}{\sqrt{t_i}}\right)^2\gtrless \left(\frac{\mathbf{\hat{\mu}^0}_i+\hat{\mu}^{1:t}_i}{t_i}-\frac{\mathbf{\hat{\mu}^0}_j+\hat{\mu}^{1:t}_j}{t_j}\right)^2$$ for each $i,j\in[n]$, where
    $\hat{\mu}^{1:t}_i$ denotes the arm rewards collected by Algorithm 1 (with prior) for arm $i$ in rounds 1 through $t$. For a fixed randomization $\mathbf{z}$, there are at most $n^t$ distinct possibilities for $\{\hat{\mu}^{1:t}_1,\dots,\hat{\mu}^{1:t}_n\}$. Moreover, there are at most $t^2$ distinct possibilities for $t_i,t_j$. Summing up over $t$, we have at most $\sum_{t=1}^Tt^2n^t=O(T^3n^T)$ distinct predicates. To ensure rational computations, we introduce $O(T)$ `constant' inputs $\log t$ and $\sqrt{t}$ for each $t\in [T]$, in addition to the $n+1$ hyperparameters. The degree of any predicate is at most 4. 
    
    By Theorem 3.3 of [Barlett et al.\ 2022], the pseudodimension of the loss function class is 
    $O((n+1+T)\cdot\log(4\cdot T^3n^T))=O((n+T)T\log n)$. The sample complexity bound now follows from standard results (Chapter 19,  \cite{anthony1999neural}).
\qed

\section{Further Details for Setup of Experiments in the Main Body}

\begin{figure}[t]
\centering{\includegraphics[scale=0.5]{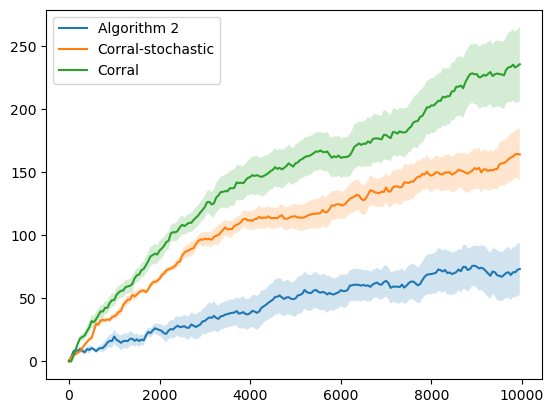}}

    \caption{Comparison of Algorithm \ref{alg:tuned-ucb} to corralling based algorithms \textsc{Corral} \cite{agarwal2017corralling} and \textsc{Corral-stochastic} \cite{arora2021corralling} on 2-arm bandits with large arm-reward gap.}
    \label{fig:corral2}
\end{figure}


{\bf More details on Baselines.} Our baselines include the original \textsc{Corral} algorithm (for both stochastic and adversarial bandits, with regret bound $O(\sqrt{MT})$ where $M$ is the number of base algorithms) of \cite{agarwal2017corralling} and the improvement \textsc{Corral-stochastic} for stochastic bandits (with instance-dependent regret bounds) due to \cite{arora2021corralling}. We run both the algorithms with the default theory-suggested parameters e.g.\ step-size parameters $\eta$ and $\beta$ in \textsc{Corral} are set according to Theorem  5 and Algorithm 1 in \cite{agarwal2017corralling}.  A deviation from the theory is that we never restart the corralling algorithms, similar to previous experiments by \cite{arora2021corralling}. The regret bounds from theory remain meaningful for this variant in practice.

{\bf Details of empirical runs.} Since corralling methods can only be implemented for a finite number of base algorithms, we consider a finite grid of ten $\alpha$ values $\{0.1, 0.2, 0.5, 1, 2, 5, 10, 20, 50, 100\}$ in all our experiments. Note that this is a limitation of corralling which our approach overcomes (we can learn over a continuum of hyperparameters), but for fair empirical comparison we learn the best hyperparameter using our algorithm on the same grid from similar training tasks. For the synthetic two-arms dataset, different tasks are generated by randomly sampling the mean reward for the second arm according to $\cN(0.51, 0.01)$ while keeping the mean reward for the first arm fixed at $0.5$. For the CIFAR experiments, we sample tasks according to a uniform distribution over  the label noise
proportions $(0.0, 0.1, 0.2, 0.3)$, and the network depths
$(2, 4, 6, 8, 10)$ used in the feedforward networks. In both cases we learn the hyperparameters using Algorithm \ref{alg:tuned-ucb} from $N=200$ training tasks with relatively small time horizons $T_o=20$. The learned hyperparameter values over the grid are $5.0, 0.1$ and $0.5$ respectively for the synthetic 2-arm data, CIFAR-10 and CIFAR-100. For the test run, we run the algorithms for five sampled test tasks with time horizon $T=10^6$ and report the mean and standard deviation of the regret for each time step in Figure \ref{fig:corral}. For the synthetic task, we generate the rewards according the Bernoulli distribution for the test tasks sampled from the problem distribution. For CIFAR we sample the task from the uniform task distribution and (since running neural network training is computationally intensive) we generate the first 20 runs as usual and simulate rewards for the remaining runs for each arm by using a Gaussian distribution with the same mean and standard deviation as the 20 time steps.

{\bf Rationale for the synthetic dataset in Figure  \ref{fig:corral}.} The goal for our synthetic data is to present a simple yet challenging setting for hyperparameter learning. Since corralling algorithms usually get worse with increased number of arms (as does our offline training complexity), two-arms is the simplest setting for corralling. By keeping a small gap in the arm rewards, typical instance-dependent regret bounds are weak for this problem and a large amount of exploration (relatively high $\alpha$ value) is needed to learn the best arm. On a longer time-horizon of $T=2\times 10^6$, the advantage of our approach becomes clearer even in this challenging case. Contrast this with Figure \ref{fig:corral2}, where the arm rewards are 0.5 and $\cN(0.7,0.01)$ and our approach is significantly better even on a much shorter timescale.

{\bf Uniform and Gaussian arm rewards} In addition to Bernoulli arm reward distributions, we also study the case of two-arm bandits with Uniform and Gaussian arm reward distributions and compare our approach with the corralling baselines (Figure \ref{fig:2arms-ug}). For the uniform distribution case, we set arm rewards as $U[0.5,0.6]$ and $U[0.55,0.65+\epsilon]$ and for the Gaussian distribution case, we set arm rewards as $\cN(0.5,0.1)$ and $\cN(0.65,0.1+\epsilon)$. Here $U[a,b]$ denotes the uniform distribution over the interval $[a,b]$ and $\epsilon\sim\cN(0,0.01)$. The rest of the setup is same as in the Bernoulli case above. Our algorithm outperforms both corralling algorithms in either case, with the difference being evident even on short timescales like $T=10000$.

\begin{figure*}[t]
\centering
\subfloat[Uniform arm rewards]{\includegraphics[width=0.392\textwidth]{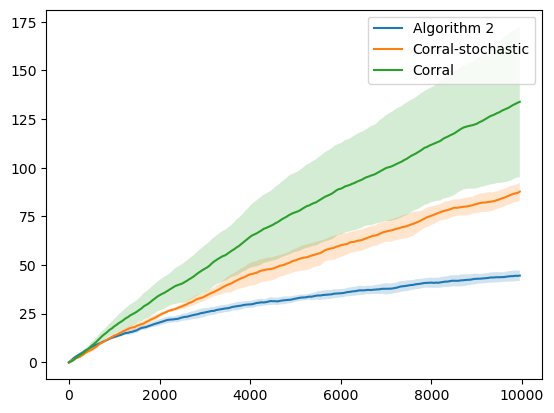}}
         \subfloat[Gaussian arm rewards]{\includegraphics[width=0.4\textwidth]{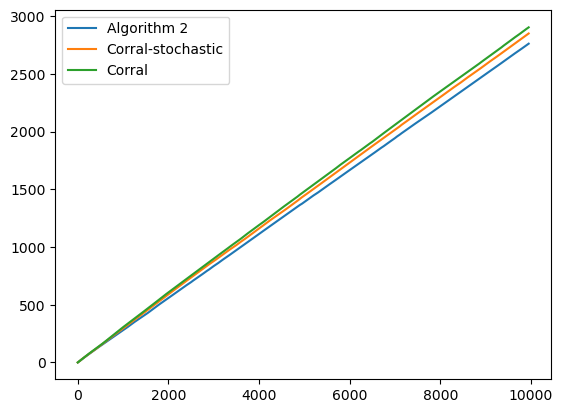}}

    \caption{Comparison of Algorithm \ref{alg:tuned-ucb} with corralling baselines on 2-arm bandits synthetic datasets with uniform and Gaussian arm reward distributions.}
    \label{fig:2arms-ug}
\end{figure*}

\section{Additional Experiments}\label{sec:experiments}


We empirically evaluate our theoretical results on synthetic as well as real benchmark data. To study the effect of the exploration parameter $\alpha$ in Algorithm \ref{alg:ucb}, we consider the following tasks

\textbf{Synthetic tasks.} We consider the following three families of arm reward distributions
\begin{itemize}[noitemsep]
    \item {\it Bernoulli}: Arm 1 draws a reward of 0 or 1 with probability 0.5 each in all tasks. Arm 2 draws a reward of value 1 with probability $p$ sampled from $\cN(0.5,\sigma_b^2=0.2)$ and 0 otherwise. 
    \item {\it Uniform}: Arm 1 draws a reward according to $U[2,6]$ and arm 2 according to $U[4.1-\sigma_u,4.1+\sigma_u]$ with $\sigma_u$ drawn from $\cN(1.5,0.5)$ for each task. Here $U[a,b]$ is the uniform distribution over $[a,b]$, and $\cN(\mu,\sigma^2)$ denotes the standard Gaussian distribution.
    \item {\it Gaussian}: Arm 1 draws a reward from $\cN(4,1)$, and arm 2 from $\cN(4.1,\sigma_g^2)$ with $\sigma_g^2\sim U[0.5,1.5]$.
\end{itemize}

\textbf{Hyperparameter tuning for Deep Learning.} We also consider the task of tuning the learning rate for training neural networks on  image classification tasks. The arms consist of 11 different learning rates $(0.001, 0.002, 0.004, 0.006, 0.008, 0.01, 0.05, 0.1, 0.2, 0.4, 0.8)$ and the arm reward is given by the classification accuracy of feedforward neural networks trained via SGD (stochastic gradient descent) with batch size 64 for 20 epochs. We present our results for CIFAR-10 and CIFAR-100~\cite{Krizhevsky09learning} benchmark image classification datasets. The task distribution
is defined by a uniform distribution over the label noise
proportions $(0.0, 0.1, 0.2, 0.3)$, and the network depth
$(2, 4, 6, 8, 10)$. All our experiments on CIFAR are run on $1$ Nvidia A100 GPU.

\textbf{Regret as a function of $\alpha$.} For both the synthetic and cifar experiments, we sample $N$ training tasks from the task distributions described as above, and compute the regret of UCB$(\alpha)$ for $\alpha\in[0,1]$ over $T_o=100$ rounds for synthetic experiments (20 rounds for CIFAR). For both these experiments, to estimate the dependence of expected regret on the exploration parameter, we take average over $N=10000$ training tasks (Figure \ref{fig:two-arm-alpha}). We observe that the regret is minimized for data-dependent $\alpha$. Further, the best parameter is often $\alpha<1$ for which previously known theoretical techniques do not provide a regret analysis (the techniques work for $\alpha\ge 1$ and suggest setting $\alpha=1$ to minimize the regret upper bound, which would be suboptimal).

\textbf{Number of discontinuities.} We also compute the average number of discontinuities in the dual loss function to estimate $Q_\cD$ for the different distributions. The distribution-specific upper bounds are typically much smaller than the worst case upper bounds that we obtain for arbitrary distributions. For example, for the Bernoulli setting above, $Q_\cD\approx 30\ll T^n$, implying that the sample complexity of learning the best hyperparameter is much smaller.

\begin{figure*}[t]
\centering
\subfloat[Two-arm bandits, simulated data]{\includegraphics[width=0.47\textwidth]{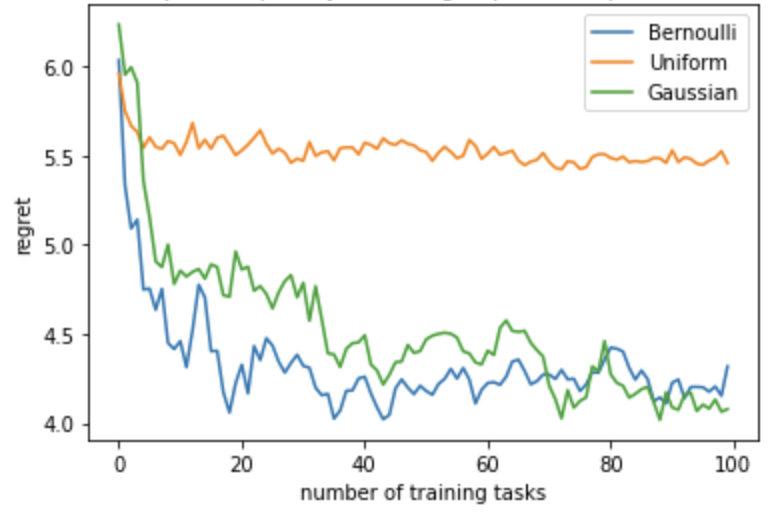}}
         \subfloat[Multi-arm bandits, CIFAR data]{\includegraphics[width=0.48\textwidth]{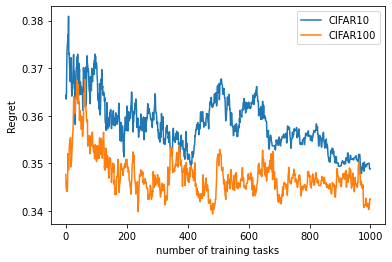}}

    \caption{Variation of  regret on test (online) tasks with number of training tasks $N$ for tuning \textsc{UCB}$(\alpha)$.}
    \label{fig:sc-alpha}
\end{figure*}

\begin{figure*}[b]
    \centering
         \subfloat[$\sin x +\cos y$]{\includegraphics[width=0.33\textwidth]{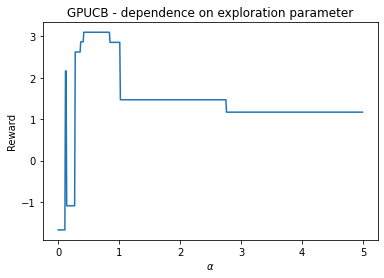}}
         \subfloat[$(x-1)(x+0.5)+(y-1.5)(y+1)$]{\includegraphics[width=0.33\textwidth]{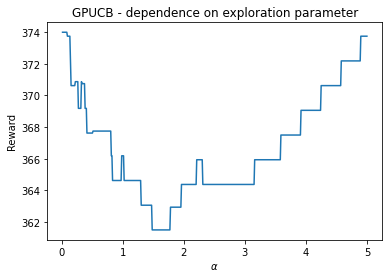}}
         \subfloat[$x+y^2$]{\includegraphics[width=0.33\textwidth]{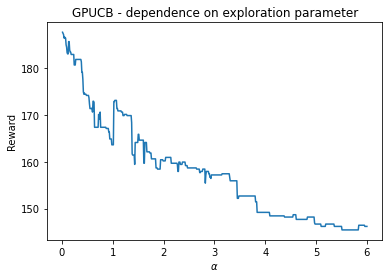}}
  \caption{Variation of total reward with $\sigma^2$ parameter in GP-UCB on a $24\times 24$ two-dimensional grid, for $T=20$. We observe that $Q_\cD\ll n$ in each case.}
\label{fig-gp}
\end{figure*}

\textbf{Quality of ERM Solution.} To estimate the generalization error of ERM solution, we consider $N_t=10$ test tasks with time horizon $T=100$ for synthetic (20  for CIFAR) drawn from the same distribution and use the learned parameter from $N$ training tasks. We use Algorithm \ref{alg:tuned-ucb} to learn the exploration parameter $\hat{\alpha}$ from the training tasks, which is then evaluated by computing the average regret of \textsc{UCB}$(\hat{\alpha})$ over the test tasks. As $N$ is varied, the generalization error of the learned parameter roughly decays quadratically with the number of training tasks (Figure \ref{fig:sc-alpha}). 

Note that we do not compare with corralling, and variance-aware algorithms because they work in a fundamentally different setting compared to us (our algorithms would out-perform them because they are not transferring knowledge). Consequently, this would be unfair to these techniques and wouldn't lead to any meaningful insights. Furthermore, $T_o$ (number of online rounds) is quite small in practical use cases we are interested in (such as HP Tuning in deep learning). For example, in our experiments on CIFAR, $T$ is set to 20. For such a small $T$, neither corralling nor variance estimation algorithms will be able to do much, because they simply don't have enough samples to infer the best parameter.

\textbf{GP-UCB.} We perform similar experiments for tuning the noise parameter in GP-UCB, for different underlying functions $f$. We consider the optimization of two-dimensional functions over a grid of $n=576$ points. Our experiments indicate the significance of learning data-driven values of $\sigma^2$, and also indicate $Q_\cD$ is much smaller than worst-case bounds (Figure \ref{fig-gp}). For $f(x,y)=\sin x + \cos y$, we observe $Q_\cD\approx 10$, implying a small inter and intra task complexity for learning the parameter.



\section{Empirical Size of $Q_\cD$}

We provide additional experiments studying the size of $Q_\cD$ which plays an important role in our sample complexity. For the multi-armed bandits (MAB) problem, 
we consider  tasks defined by the following three families of arm reward distributions.

\begin{itemize}[noitemsep]
    \item {\it Bernoulli}, $B(\sigma)$: Arm 1 draws a reward of 0 or 1 with probability 0.5 each in all tasks. Arm 2 draws a reward of value 1 with probability $p$ sampled from $\cN(0.5,\sigma^2)$ and 0 otherwise. As $\sigma$ is varied, we get different families. 
    \item {\it Uniform}, $U(\sigma)$: Arm 1 draws a reward according to $U[2,6]$ and arm 2 according to $U[4.1-\sigma_u,4.1+\sigma_u]$ with $\sigma_u$ drawn from $\cN(1.5,\sigma^2)$ for each task. 
    \item {\it Gaussian}, $G(\sigma)$: Arm 1 draws a reward from $\cN(4,1)$, and arm 2 from $\cN(4.1,\sigma^2)$.
\end{itemize}

In Table \ref{tab:qd}, we report the estimated values of $Q_\cD$ for tuning the exploration parameter in UCB for the above families of MAB problems. The typical number of pieces is much smaller than the theoretical upper bound of $n^T=2^{100}\approx 1.26\mathrm{E}30$. To compute reliable estimates, we average over 10000 runs for each task and report 95\% confidence intervals.

We observe similar small size of $Q_\cD$ in our experiments with the noise parameter in GP-UCB. For example, for $f(x,y)=\sin x + \cos y$, we observe $Q_\cD\approx 10$. This implies small inter as well as intra task complexity in this case, as $Q_\cD<n=576$.

\begin{table*}[t]
\centering

\begin{tabular}{cccc}
\toprule
$\sigma$\textbackslash Distribution &
Bernoulli, $B(\sigma)$ &
Uniform, $U(\sigma)$ &
Gaussian, $G(\sigma)$\\
\hline
$\sigma=0.1$ &
$28.26\pm1.05$ &
 $20.03\pm0.50$ &
$32.23\pm1.20$ \\
$\sigma=0.2$  &
$31.77\pm1.24$ &
 $20.53\pm0.52$ &
$28.70\pm1.01$ \\
$\sigma=0.3$  &
 $35.93\pm1.40$&
$19.79\pm0.50$  &
 $25.30\pm 0.85$ \\
$\sigma=0.5$ &
$40.84\pm1.57$ &
 $19.63\pm 0.53$ &
$22.48\pm0.68$ \\
\bottomrule
\end{tabular}
\caption{Estimates of $Q_\cD$ for various distributions corresponding to UCB parameterized by exploration parameter $\alpha\in[0,1]$ for time horizon $T=100$.}
\label{tab:qd}
\end{table*}



\end{document}